\def\eqref#1{equation~\ref{#1}}
\def\1{\bm{1}}
\def\rvk{{\mathbf{k}}}
\def\rvq{{\mathbf{q}}}
\def\rvs{{\mathbf{s}}}
\def\rvv{{\mathbf{v}}}
\def\rvx{{\mathbf{x}}}
\def\rvy{{\mathbf{y}}}
\def\mB{{\bm{B}}}
\def\mC{{\bm{C}}}
\DeclareMathAlphabet{\mathsfit}{\encodingdefault}{\sfdefault}{m}{sl}
\SetMathAlphabet{\mathsfit}{bold}{\encodingdefault}{\sfdefault}{bx}{n}
\NewDocumentCommand{\rot}{O{45} O{1em} m}{\makebox[#2][l]{\rotatebox{#1}{#3}}}%
\title{
\setlength{\tabcolsep}{3pt}
MossNet: Mixture of State-Space Experts is a \\ Multi-Head Attention
}
\author{%
  Shikhar Tuli, James Smith, Haris Jeelani, Chi-Heng Lin, Abhishek Patel, \\ {\bf Vasili Ramanishka, Yen-Chang Hsu, Hongxia Jin} \\
  Samsung Research America \\
  665 Clyde Ave, Mountain View, CA 94043 \\
}
\begin{document}
\maketitle

\begin{abstract}
  Large language models (LLMs) have significantly advanced generative applications in natural language processing (NLP). Recent trends in model architectures revolve around efficient variants of transformers or state-space/gated-recurrent models (SSMs, GRMs). However, prevailing SSM/GRM-based methods often emulate only a single attention head, potentially limiting their expressiveness. In this work, we propose \textbf{MossNet}, a novel \underline{m}ixture-\underline{o}f-\underline{s}tate-\underline{s}pace-experts architecture that emulates a linear multi-head attention (MHA). MossNet leverages a mixture-of-experts (MoE) implementation not only in channel-mixing multi-layered perceptron (MLP) blocks but also in the time-mixing SSM kernels to realize multiple ``attention heads.'' Extensive experiments on language modeling and downstream evaluations show that MossNet outperforms both transformer- and SSM-based architectures of similar model size and data budgets. Larger variants of MossNet, trained on trillions of tokens, further confirm its scalability and superior performance. In addition, real-device profiling on a Samsung Galaxy S24 Ultra and an Nvidia A100 GPU demonstrate favorable runtime speed and resource usage compared to similarly sized baselines. Our results suggest that MossNet is a compelling new direction for efficient, high-performing recurrent LLM architectures.
\end{abstract}

\section{Introduction}

Rapid advancements in training and deployment of foundation models have revolutionized various generative applications, including the development of sophisticated chatbots~\citep{chatgpt}, generation of video~\citep{sora}, coding assistance~\citep{code-llama}, and robotic manipulation~\citep{rt2}. With an increasing number of LLM architectures being proposed, such as transformers~\citep{vaswani,gpt}, SSMs~\citep{lssl,s4}, and linear GRMs~\citep{gateloop,hgrn}\footnote{Although SSMs can be considered a specific subset of GRMs, we distinguish them due to their distinct terminology in the literature and their basis in state-space theory, encompassing both continuous-time systems and their discretization.}, the field of NLP continues to evolve at a remarkable pace. The continuous development of these models presents both opportunities and challenges.

\subsection{Challenges and Motivation}

Transformers, introduced by~\citet{vaswani}, have been particularly influential in NLP due to their success in language modeling. The transformer architecture relies on a stack of MHA and MLP blocks. Despite their effectiveness, transformers face several efficiency challenges, including an inability to model outside the context window (although, recent works attempt to mitigate this;~\citealt{infini-attention}), quadratic scaling of compute, and linear scaling of cache with respect to context length. Efficient variants have been proposed that attempt to overcome these drawbacks~\citep{efficient-trnsformers-survey}. Other recent works aim to improve efficiency by replacing the MLP block with a mixture-of-expert (MLP-MoE) block~\citep{switch-transformer,mixtral} or the MHA block with a mixture-of-attention (MHA-MoA) block~\citep{moa}. However, these solutions often trade {\color{black} trade performance for efficiency}.

SSMs, along with recently proposed GRMs, present a promising alternative to the transformer architecture, offering better computational and memory efficiency due to their inherent recurrent design.~\citet{mamba} introduced Mamba, a hardware-optimized selective SSM that achieves high efficiency without sacrificing performance, thanks to the work-efficient parallel scan algorithm~\citep{blelloch,parallelizing-linear-rnn}. Recently proposed extensions of the Mamba architecture, including BlackMamba/MoE-Mamba~\citep{blackmamba,moe-mamba} and Jamba~\citep{jamba}, along with other parallely-proposed GRMs~\citep{rwkv,retnet,gateloop,griffin} match the performance of transformers while maintaining the benefits of recurrent models. More importantly, these works show that such models can emulate the self-attention operation in their mathematical parallel formulation, albeit only a single attention head.

\subsection{Our Contribution}

Due to the single attention head modeling in existing SSMs and GRMs, they exhibit many performance drawbacks~\cite{repeat-after-me,jamba,mamba-360}. Hence, in this work, we propose MossNet, a robust and scalable alternative to current LLM architectures based on a mixture of state-space experts, addressing key challenges, and pushing the boundaries of what is achievable with SSMs. MossNet attempts to model an MHA by extending an existing SSM architecture. More concretely, we summarize the contributions of this work next.

\begin{itemize}
    \item We propose MossNet, a novel architecture that models not just a single self-attention head but an MHA (specifically, its linear mixture-of-expert implementation, i.e., MHA-MoA), just like state-of-the-art transformer models (with linear attention). We mathematically show how a mixture of state-space experts models an MHA. 
    \item We do a \emph{fair} comparison of recently-proposed LLM architectures based on perplexity (PPL) and downstream benchmark performance for small-scale models. Through rigorous experimentation, we empirically show how MossNet outperforms other popular transformer- and SSM/GRM-based baselines.
    \item We train larger variants of MossNet models, namely MossNet-8x200M+, and compare it against state-of-the-art baselines of similar active and total parameter counts. MossNet-8x200M+, in top-2 mode, outperforms Qwen2.5-0.5B by a significant margin, despite being trained on a fraction of pre-training tokens.
    \item We profile the prefill and generation speed of the proposed MossNet models on a Samsung Galaxy S24 Ultra smartphone and an Nvidia A100 GPU. {\color{black} On resource-constrained devices, MossNet-8x200M+ is \emph{significantly} faster in terms of prefill and generation speed along with memory consumption when compared to transformers- and SSM-based baselines with similar active parameter counts.}
\end{itemize}

The rest of the article is organized as follows. Section~\ref{sec:method} details the MossNet architecture along with the proposed evaluation methods. Section~\ref{sec:experiments} presents the experimental results. 
Finally, Section~\ref{sec:conclusion} concludes the article and Section~\ref{sec:limitations} provides the limitations.

\section{Method}
\label{sec:method}

In this section, we discuss the implementation details of the MossNet model.

\subsection{Preliminaries}

We now discuss the required background on the Mamba architecture and the traditional MoE implementation in models like Mixtral-8x7B~\citep{mixtral} and BlackMamba/MoE-Mamba~\citep{blackmamba,moe-mamba}.

\subsubsection{Mamba}

SSMs are a class of sequence models with linear complexity with respect to the sequence length. This results in superior efficiency, especially for long-context input. Multi-dimensional SSMs are defined using four parameters $\bm{\Delta}$, $\bm{A}$, $\bm{B}$, and $\bm{C}$, and sequence-to-sequence transformations from $\rvx_t \in \mathbb{R}^N$ to $\rvy_t \in \mathbb{R}^M$ through an implicit latent state $\rvs_t \in \mathbb{R}^P$ as follows~\cite{s4},
\begin{align}
    \rvs'_t &= \bm{A} \rvs_t + \bm{B} \rvx_t \label{eq:ssm_cont_eq_1} \\
    \rvy_t &= \bm{C} \rvs_t \label{eq:ssm_cont_eq_2}
\end{align}
where, $\bm{A} \in \mathbb{R}^{P \times P}$, $\bm{B} \in \mathbb{R}^{P \times N}$, $\bm{C} \in \mathbb{R}^{M \times P}$ {\color{black}, and $\rvs'_t$ is the derivative of $\rvs_t$}. In its discrete parameterization,
\begin{align}
    \rvs_t &= \bm{\bar{A}} \rvs_{t-1} + \bm{\bar{B}} \rvx_t \label{eq:ssm_disc_eq_1} \\
    \rvy_t &= \bm{C} \rvs_t \label{eq:ssm_disc_eq_2}
\end{align}
where,
\begin{align}
    \bm{\bar{A}} &= \exp{(\bm{\Delta} \bm{A})}, \text{and} \\
    \bm{\bar{B}} &= (\bm{\Delta} \bm{A})^{-1} (\exp{(\bm{\Delta} \bm{A})} - \bm{I}) \cdot \bm{\Delta} \bm{B}.
\end{align}

One can efficiently compute a linear dynamical system like this in parallel via a convolution~\cite{s4} or parallel associative scan~\cite{blelloch}. On the other hand, one can leverage the recurrent form presented above for rapid generation at inference time. Mamba~\cite{mamba} makes the discrete parameters input-dependent, i.e., $\bm{\bar{A}}_t$, $\bm{\bar{B}}_t$, and $\bm{C}_t$.

\citet{mamba} offer an intuitive interpretation of these parameters. $\bm{\bar{A}}$ controls the transition dynamics, while $\bm{\bar{B}}$ and $\bm{C}$ control the selectivity of the input $x_t$ into the hidden state $h_t$ and the state into the output $y_t$, respectively. Finally, $\bm{\Delta}$ controls the balance between how much to focus or ignore the current input $x_t$. However, in an MHA, each head focuses on different aspects of the relationships between words/tokens. An MHA thus provides enhanced expressiveness, mitigates information loss, and improves learning capability compared to a single attention head. In the \emph{same} spirit, we hypothesize that in an SSM, there should be multiple such parameters that focus on different parts of the input sequence. For instance, multiple $\bm{\Delta}$'s could focus on the selectivity of the current input in the context of multiple dependencies in data.

\subsubsection{Mixture of Experts}

Primarily, MoEs are synonymous with MLP layers within a transformer model (we call this the MLP-MoE block;~\citealt{switch-transformer}). Such models reduce the inference cost by routing tokens to specific MLP \emph{experts}. A router maps the token representations to experts, where each expert is simply a standard transformer MLP block. The expert to whom the token is routed is chosen from the top-$k$ of the expert probabilities, where $k$ is a hyperparameter. Mathematically, an input $\rvx_t$ is mapped through the router to a probability distribution $p_i(\rvx_t)$, where $i$ labels the experts. Upon selecting the top-$k$ probabilities, the output of the MoE layer at time-step $t$, i.e., $\rvy_t$ is a linearly weighted combination of each expert's computation on the input,
\begin{equation}
    \rvy_t = \sum_{i \in \text{top-}k} p_i(\rvx_t) \bm{E}_i(\rvx_t) \label{eq:moe_sum}
\end{equation}
where $\bm{E}_i$ is the $i$-th MLP expert.

Instead of applying MoE to the, channel-mixing, MLP layers, \citet{moa} apply the MoE to the, time-mixing, MHA blocks (we call this the MHA-MoA block). This block performs as well as the traditional MHA, while providing the benefits of MoE~\cite{switch-transformer}. We take inspiration from the MHA-MoA block in order to emulate multiple \emph{attention} heads in the proposed MossNet architecture.

\subsection{MossNet Architecture}

\begin{figure}[t]
    \centering
    \includegraphics[width=\linewidth]{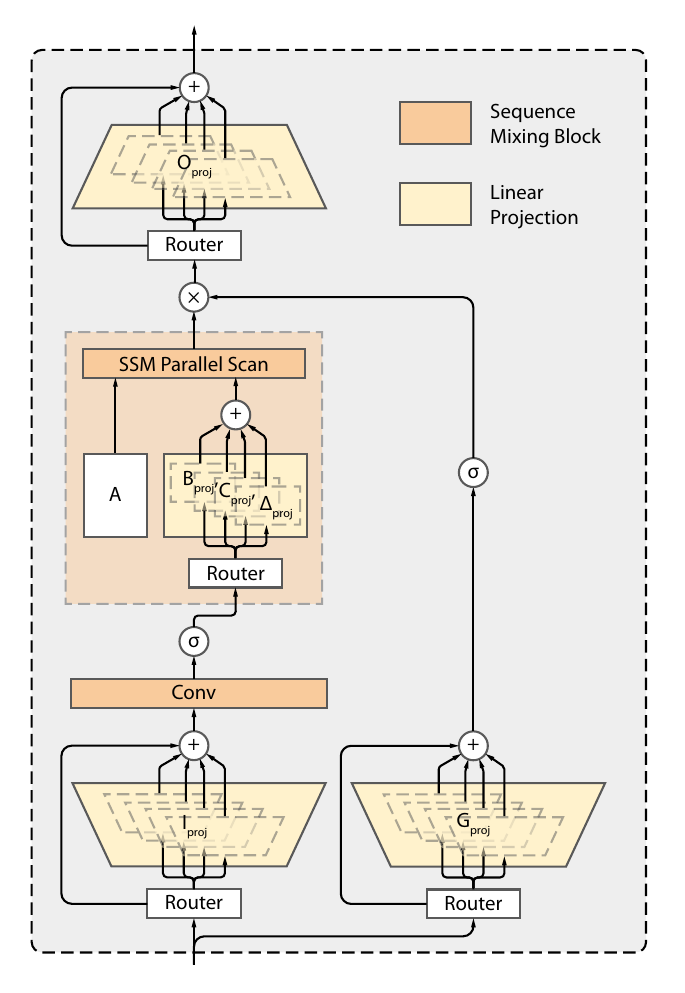}
    \caption{Simplified working schematic of the MossNet block. We implement MoE in channel mixing input, gate, and output projections and time mixing input-dependent SSM parameters $\bm{B}$, $\bm{C}$, and $\bm{\Delta}$.}
    \label{fig:MossNet_arch}
\end{figure}

MossNet extends the Mamba architecture~\cite{mamba} by implementing MoE in various projection operations. Fig.~\ref{fig:MossNet_arch} shows a working schematic of the MossNet block. Concretely, we implement MoE for the channel-mixing linear projections ($\bm{I}$, $\bm{G}$, and $\bm{O}$) and the sequence transformation input-dependent SSM parameters $\bm{B}$, $\bm{C}$, and $\bm{\Delta}$. The input-independent parameter $\bm{A}$, along with $\bm{B}$ and $\bm{\Delta}$, are used to calculate the discrete SSM parameters $\bm{\bar{A}}$ and $\bm{\bar{B}}$. The combined contribution of the mixture of state-space experts is input to the hardware-optimized SSM parallel scan kernel~\cite{mamba}.

We follow~\citet{switch-transformer} to implement the router network for the MoE implementation. Concretely, the router (implemented as a feed-forward layer) calculates the score $h(\rvx_t) \in \mathbb{R}^{N_\text{experts}}$, where $N_\text{experts}$ is the number of experts. We normalize the scores using a softmax operation to obtain $p_i(\rvx_t)$ in Eq.~(\ref{eq:moe_sum}). For equiproportionate distribution of tokens to the experts, we employ the load balancing loss~\cite{switch-transformer} and add it to the training objective with a weight $\alpha$.

\newtheorem{theorem}{Theorem}
\begin{theorem}
A mixture-of-expert implementation of $\bm{\bar{A}}$, $\bm{\bar{B}}$, and $\bm{C}$ is equivalent to a mixture-of-expert implementation of a linear multi-head attention.
\end{theorem}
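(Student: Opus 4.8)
The plan is to show that each state-space expert, once its recurrence is unrolled, is literally a single head of (gated) linear attention, and that routing the mixture-of-experts over such experts reproduces exactly the mixture combination already written in Eq.~(\ref{eq:moe_sum}). The argument therefore reduces to (i) a single-expert identity between an SSM and one linear-attention head, and (ii) a compatibility check between the two MoE combination rules.

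First I would unroll the discrete recurrence in Eqs.~(\ref{eq:ssm_disc_eq_1})--(\ref{eq:ssm_disc_eq_2}). Iterating $\rvs_t = \bm{\bar{A}}_t \rvs_{t-1} + \bm{\bar{B}}_t \rvx_t$ from the initial state yields the closed form
\begin{equation}
\rvy_t = \sum_{j=1}^{t} \bm{C}_t \Big( \prod_{k=j+1}^{t} \bm{\bar{A}}_k \Big) \bm{\bar{B}}_j \, \rvx_j .
\end{equation}
Setting $\alpha_{tj} := \bm{C}_t \big( \prod_{k=j+1}^{t} \bm{\bar{A}}_k \big) \bm{\bar{B}}_j$, this reads $\rvy_t = \sum_{j \le t} \alpha_{tj} \rvx_j$, a causal linear attention in which $\bm{C}_t$ plays the role of the query at position $t$, $\bm{\bar{B}}_j$ the key at position $j$, the cumulative product of $\bm{\bar{A}}$ the position-dependent data-dependent decay that replaces the softmax normalization, and $\rvx_j$ the value. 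This establishes the single-expert/single-head correspondence in both directions.

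Next I would attach the routing. With one parameter set $(\bm{\bar{A}}^{(i)}, \bm{\bar{B}}^{(i)}, \bm{C}^{(i)})$ per expert, the unrolling above produces one output sequence $\rvy_t^{(i)} = \sum_{j \le t} \alpha_{tj}^{(i)} \rvx_j$ per expert, i.e., one linear-attention head per expert. The router probabilities and top-$k$ selection then combine these as $\rvy_t = \sum_{i \in \text{top-}k} p_i(\rvx_t)\, \rvy_t^{(i)}$, which is precisely Eq.~(\ref{eq:moe_sum}) with each expert $\bm{E}_i$ instantiated as the $i$-th linear-attention head. Identifying the number of experts with the number of heads closes the equivalence.

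The hard part will be the bookkeeping of the multi-dimensional structure and being precise about what ``linear'' means. The cumulative product $\prod \bm{\bar{A}}_k$ is matrix-valued and data-dependent, so I must verify that, under Mamba's diagonal (scalar-per-channel) parameterization of $\bm{\bar{A}}$, this product factorizes into a per-position query/key gating exactly as gated linear attention prescribes, rather than merely resembling it. I also need to state explicitly that the attention in question is linear (no softmax), since the SSM read-out is a plain linear combination of values with the decay term supplying the only position-dependent reweighting. Handling these two points rigorously, the factorization of the state-transition product and the absence of a normalizing nonlinearity, is where the proof must be careful rather than merely suggestive.
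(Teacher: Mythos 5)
Your step (i) --- unrolling a single selective SSM into the causal linear-attention form $\rvy_t = \sum_{j\le t} \bm{C}_t \bigl(\prod_{k=j+1}^t \bm{\bar{A}}_k\bigr) \bm{\bar{B}}_j \rvx_j$ --- matches the paper's starting point. The gap is in step (ii), where you attach the routing. You instantiate each expert as a \emph{complete} SSM with its own private state $\rvs_t^{(i)}$ and parameters $(\bm{\bar{A}}^{(i)}, \bm{\bar{B}}^{(i)}, \bm{C}^{(i)})$, run the $N_\text{experts}$ recurrences independently, and combine the resulting outputs $\rvy_t^{(i)}$ with router weights as in Eq.~(\ref{eq:moe_sum}). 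That is an output-level MoE over whole SSM modules. The theorem, however, concerns a MoE implementation of the \emph{parameters themselves}: the mixtures $\bm{\bar{B}}_t = \sum_{m=1}^{N_\text{experts}} p_m(\rvx_t)\,\bm{\bar{B}}_t^m$ and $\bm{C}_t = \sum_{n=1}^{N_\text{experts}} p_n(\rvx_t)\,\bm{C}_t^n$ are formed first and then fed into a \emph{single} recurrence with one shared state. These two computations are not equivalent, and the ``compatibility check between the two MoE combination rules'' that you treat as routine is exactly where they differ.

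Concretely, substituting the parameter mixtures into the unrolled expression produces a double sum over expert \emph{pairs}, $\rvy_t = \sum_{m,n=1}^{N_\text{experts}} \sum_{i=1}^t \langle \rvq_t^m, \rvk_i^n\rangle \rvv_i$, where $\rvq_t^m$ carries $p_m(\rvx_t)\bm{C}_t^m$, $\rvk_i^n$ carries $p_n(\cdot)\,\bm{\bar{B}}_i^n$, and the value $\rvv_i = \rvx_i$ is shared across heads. The off-diagonal terms $m\ne n$ --- expert $m$'s query interacting with expert $n$'s key --- arise precisely because both mixtures read from and write to the same state; this cross-head interaction, together with the router probabilities being absorbed into the queries and keys (making them nonlinear in the input) and the shared value eliminating the output projection, is what the paper identifies as the emulated MHA-MoA (its Remarks 1--3). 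Your per-expert decomposition structurally cannot produce these terms: with private states, expert $i$'s $\bm{C}^{(i)}$ only ever meets its own $\bm{\bar{B}}^{(i)}$, and no router probability enters the heads themselves. So your argument proves a (true) statement about a different architecture --- a router-weighted average of $N_\text{experts}$ independent single-head linear attentions --- not the claimed equivalence for parameter-level mixing, which is what MossNet actually implements. Your closing concern about the diagonal parameterization of $\bm{\bar{A}}$ is legitimate but secondary: it is needed to justify the factorization $\prod_{k=j+1}^t \bm{\bar{A}}_k = \bigl(\prod_{k=1}^t \bm{\bar{A}}_k\bigr)\bigl(\prod_{k=1}^j \bm{\bar{A}}_k\bigr)^{-1}$ that both routes would use, and it does not repair the structural mismatch above.
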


\begin{proof}[Proof]
    Recall that the state evolution of a discretely parameterized multi-dimensional selective SSM is 
    \begin{align}
        \rvs_t &= \bm{\bar{A}}_t \rvs_{t-1} + \bm{\bar{B}}_t \rvx_t \\
        \rvy_t &= \bm{C}_t \rvs_t \label{eq:evol}.
    \end{align}
    Expanding Eq.~(\ref{eq:evol}), we get
    \begin{align}
        \rvy_t &=  \sum_{i=1}^t \bm{C}_t\prod_{j=i+1}^t \left(\bm{\bar{A}}_j\right)\bm{\bar{B}}_i \rvx_i  \nonumber \\
        =~& \sum_{i=1}^t \left( \bm{C}_t \prod_{j=1}^t \bm{\bar{A}}_j \right) \left( \prod_{j=1}^i \bm{\bar{A}}^{-1}_j\bm{B}_i  \right) \rvx_i \label{eq:qkv}.
    \end{align}
    On the other hand, the mixture-of-expert implementations of $\bm{\bar{B}}_t$, and $\bm{C}_t$ can be written as, 
    \begin{align}
        \bar{\bm{{B}}_t} &= \sum_{m=1}^{N_\text{experts}} p_m(\rvx_t) \bar{\mB}^m_t, \label{eq:b} \\
        {\bm{{C}}_t} &= \sum_{n=1}^{N_\text{experts}} p_n(\rvx_t) {\mC}^n_t, \label{eq:c}
    \end{align}
    where the experts are functions of input $\rvx_t$. Now, plugging in Eqs.~(\ref{eq:b}) and~(\ref{eq:c}) into Eq.~(\ref{eq:qkv}), we obtain the output at time $t$,
    \begin{align}
        \rvy_t=\sum_{m,n=1}^{N_\text{experts}}\sum_{i=1}^t \left( p_m(\rvx_t) \bm{C}^m_t \prod_{j=1}^t \bm{\bar{A}}_j \right) \times \nonumber\\
        \left( p_n(\rvx_t) \prod_{j=1}^i \bm{\bar{A}}^{-1}_j \bar{\mB}^n_i \right) \rvx_i.
    \end{align}
    If we define $$\rvq^m_t = p_m(\rvx_t) \bm{C}^m_t \left(\prod_{j=1}^t \bm{\bar{A}}_j \right),$$
    $$\rvk^n_i = p_n(\rvx_t) \left(\prod_{j=1}^i  \bm{\bar{A}}^{-1}_j \right) \bar{\mB}^n_i, \rvv_i=\rvx_i,$$ then we can put the expression of the output into a form of a weighted, linear MHA-MoA:
    \begin{align*}
        \rvy_t = \sum_{m,n = 1}^{N_\text{experts}} \sum_{i = 1}^t \langle \rvq^m_t, \rvk^n_i \rangle \rvv_i = \sum_{m,n = 1}^{N_\text{experts}} \text{Attention}_{m,n},
    \end{align*}
    where we interpret $\rvq^m$, $\rvk$, and $\rvv$ as the $m$-th head's query vector, the $n$-th head's key vector, and the shared value vector for all heads, respectively. 
    Finally, we remark that the above expression does not use an output projection since the value vector is shared and equal to $\rvx_t$ for all heads. 
\end{proof}

\paragraph{Remark 1} The above expression differs from the traditional MHA in three aspects: 1) Each head's query interacts with the keys from all other heads, contrasting with the standard approach where queries interact only with their corresponding keys. 2) The key and query are functions of the router probabilities, making them non-linear functions of the input.
3) The value vector is shared among all heads, which eliminates the need for an output projection. 

\paragraph{Remark 2} The above expression differs from the MHA-MoA implementation by \citet{moa} in that the router leverages multiple query and key experts, instead of multiple query experts and common key/value experts.

\paragraph{Remark 3} In the above formulation, we neglect the MoE implementation of $\bm{\bar{A}}$, a function of $\bm{A}$ and $\bm{\Delta}$, for simplicity. In MossNet, we implement $\bm{\Delta}$ as an MoE as well. This would be equivalent to the above formulation, however, at the cost of a more complex set of equations.

\begin{figure*}[t!]
    \centering
    \includegraphics[width=\linewidth]{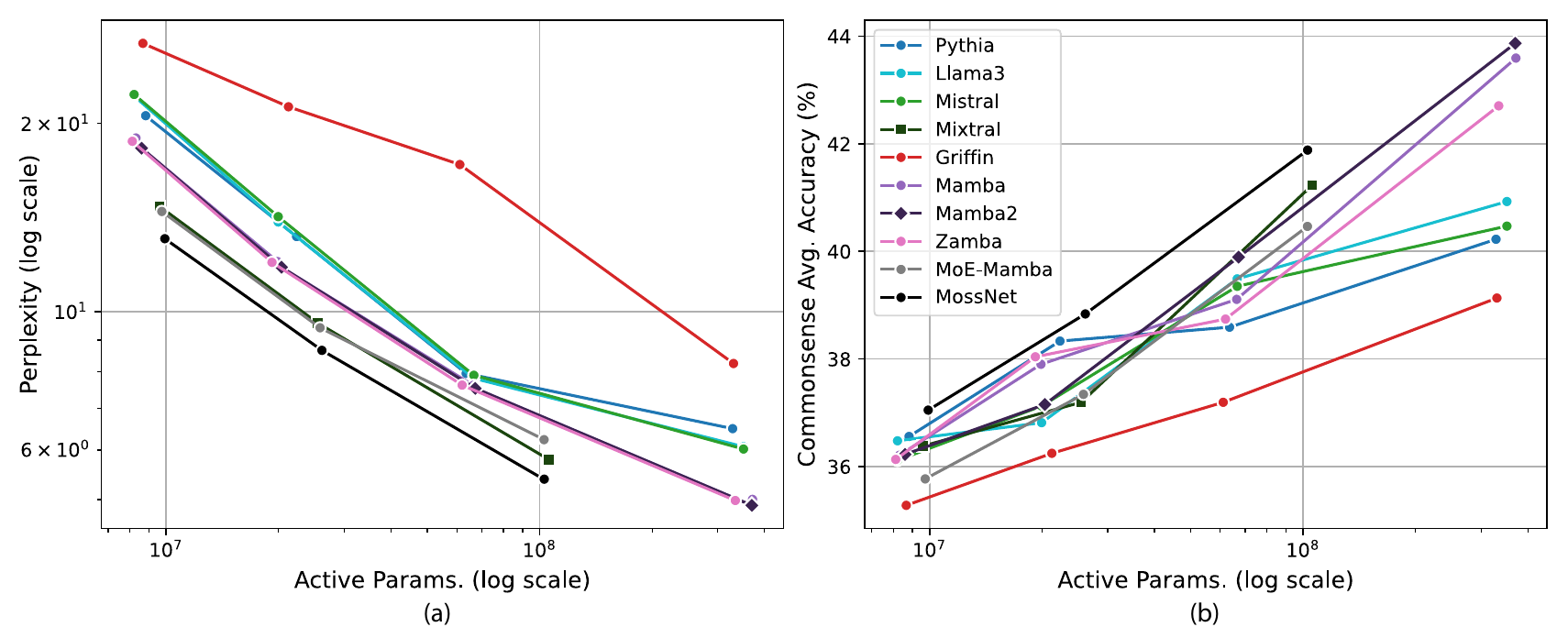}
    \caption{(a) Perplexity and (b) commonsense average accuracy scaling for \emph{fairly-trained} models.}
    \label{fig:commonsense_ppl_scaling}
\end{figure*}

\begin{table*}[t]
\caption{Performance of MossNet and other baselines on SWDE (zero‑shot accuracy), FDA (zero‑shot accuracy), TriviaQA (closed‑book, zero‑shot accuracy), SQuADv2 (zero‑shot F1), RACE (zero‑shot accuracy) and MMLU (five‑shot accuracy) benchmarks.}
\resizebox{\linewidth}{!}{
\begin{tabular}{@{}l|cc|c|cc|cccc|c@{}}
\toprule
\multirow{2}{*}{\textbf{Model}} &
\multicolumn{2}{c|}{\textbf{Recall}} &
\textbf{Closed‑book} &
\multicolumn{2}{c|}{\textbf{Reading Comprehension}} &
\multicolumn{4}{c|}{\textbf{MMLU}} &
\multirow{2}{*}{\textbf{Avg.}}\\ \cmidrule(l){2-10}
& \textbf{SWDE} & \textbf{FDA} & \textbf{TriviaQA} &
\textbf{SQuADv2} & \textbf{RACE} &
\textbf{Hum.} & \textbf{Social Sci.} & \textbf{STEM} & \textbf{Other} &
\\ \midrule
Pythia-9M                       & 0.9 & 0.0 & 0.0 & 11.1 & 23.1 & 24.7 & 22.8 & 23.2 & 23.4 & 14.4 \\
Llama-8M                        & 1.4 & \textbf{0.2} & 0.0 & 4.2 & 23.6 & 24.9 & 23.8 & \textbf{26.9} & 25.6 & 14.5 \\
Mistral-8M                      & 0.8 & 0.1 & 0.0 & 11.8 & 24.3 & 24.5 & 24.0 & 22.9 & 24.2 & 14.7 \\
Mixtral-8x8M                    & 0.1 & 0.0 & 0.1 & 15.5 & 23.4 & 23.9 & 21.7 & 22.4 & 25.4 & 14.7 \\
Griffin-9M                      & 1.1 & 0.0 & 0.0 & 20.6 & 22.5 & 24.2 & 21.7 & 21.3 & 24.0 & 15.0 \\
Mamba-8M                        & 0.6 & 0.0 & 0.0 & 9.8 & 24.0 & 24.5 & 22.0 & 22.7 & 23.6 & 14.1 \\
Mamba2-9M                       & 1.3 & 0.1 & \textbf{0.1} & 6.8 & \textbf{25.4} & 24.1 & 22.4 & 22.3 & 24.8 & 14.1 \\
Zamba-8M                        & \textbf{2.0} & 0.1 & 0.1 & 31.1 & 23.2 & 23.9 & 24.6 & 26.9 & 25.2 & 17.5 \\
MoE‑Mamba‑8x8M                  & 1.1 & 0.0 & 0.0 & \textbf{37.8} & 22.8 & 24.4 & 23.1 & 23.0 & 24.5 & 17.4 \\
\rowcolor{gray!20}
MossNet‑8x8M                    & 1.4 & \textbf{0.2} & 0.0 & 34.7 & 24.4 & \textbf{25.1} & \textbf{25.0} & 24.9 & \textbf{25.8} & \textbf{17.9} \\ \midrule
Pythia‑22M                      & 0.8 & 0.3 & 0.0 & 21.6 & 25.4 & 25.4 & 23.4 & 27.4 & 24.1 & 16.5 \\
Llama‑20M                       & 2.3 & 0.1 & 0.1 & 15.2 & 24.4 & 24.1 & 22.4 & 23.9 & 23.3 & 15.1 \\
Mistral‑20M                     & 0.7 & 0.0 & 0.1 & 5.5 & 24.2 & 23.9 & 21.9 & 23.0 & 23.8 & 13.7 \\
Mixtral‑8x20M                   & 1.1 & 0.0 & 0.3 & 5.5 & 23.6 & 24.8 & 23.8 & 25.0 & 24.4 & 14.3 \\
Griffin‑22M                     & 0.9 & 0.1 & 0.0 & 25.5 & 21.2 & 24.2 & 21.9 & 22.5 & 23.6 & 15.5 \\
Mamba‑20M                       & 0.9 & 0.1 & 0.2 & 6.1 & 22.7 & 24.2 & 22.5 & 22.0 & 23.8 & 13.6 \\
Mamba2‑20M                      & 0.9 & 0.2 & 0.1 & 4.8 & 25.6 & 24.6 & 23.8 & 27.6 & 24.0 & 14.8 \\
Zamba‑20M                       & 4.7 & \textbf{0.5} & 0.1 & 9.2 & 24.3 & 24.0 & 23.3 & 25.8 & 27.8 & 15.5 \\
MoE‑Mamba‑8x20M                 & 3.1 & 0.0 & 0.4 & 1.9 & 25.3 & \textbf{26.3} & 24.3 & 25.3 & 24.4 & 14.7 \\
\rowcolor{gray!20}
MossNet‑8x20M                   & \textbf{4.8} & 0.3 & \textbf{0.4} & \textbf{26.3} & \textbf{25.8} & 24.3 & \textbf{26.0} & \textbf{28.8} & \textbf{27.9} & \textbf{20.1} \\ \midrule
Pythia‑64M                      & 5.6 & 0.5 & 0.8 & 21.3 & 27.6 & 24.8 & 24.6 & \textbf{28.4} & 24.5 & 16.5 \\
Llama‑67M                       & 7.8 & 0.4 & 0.8 & 13.3 & 25.8 & 24.1 & 24.6 & 26.2 & 25.8 & 16.5 \\
Mistral‑67M                     & 0.3 & 0.0 & 0.6 & 5.0 & 24.0 & 24.1 & 23.2 & 26.4 & 22.7 & 14.0 \\
Mixtral‑8x67M                   & 0.5 & 0.0 & 2.2 & 16.1 & 26.0 & 24.4 & 24.5 & 22.3 & 22.5 & 15.4 \\
Griffin‑61M                     & 0.5 & 0.0 & 0.0 & 32.3 & 23.6 & 24.3 & 22.0 & 21.5 & 23.8 & 16.4 \\
Mamba‑66M                       & 3.7 & 0.3 & 0.7 & 3.7 & 25.6 & 25.1 & 23.5 & 25.0 & 23.2 & 14.5 \\
Mamba2‑67M                      & 3.3 & 0.2 & 0.5 & 2.5 & 25.6 & \textbf{25.4} & 24.4 & 25.9 & 25.5 & 14.8 \\
Zamba‑62M                       & 8.3 & 0.4 & 1.2 & 3.3 & 25.7 & 24.7 & 23.8 & 26.9 & 25.5 & 15.5 \\
MoE‑Mamba‑8x66M                 & 5.0 & 0.6 & 1.1 & 3.0 & 25.9 & 25.1 & 23.4 & 25.0 & 23.1 & 14.7 \\
\rowcolor{gray!20}
MossNet‑8x66M                   & \textbf{13.0} & \textbf{1.4} & \textbf{2.9} & \textbf{34.4} & \textbf{27.9} & 25.2 & \textbf{24.8} & 25.4 & \textbf{25.9} & \textbf{20.1} \\ \midrule
{\color{gray} Pythia‑330M}  & {\color{gray} 11.0} & {\color{gray} 0.5} & {\color{gray} 1.1} & {\color{gray} 3.0}  & {\color{gray} 26.7} & {\color{gray} 25.1} & {\color{gray} 28.6} & {\color{gray} 27.6} & {\color{gray} 24.1} & {\color{gray} 16.4} \\
{\color{gray} Llama‑350M}   & {\color{gray} 9.3}  & {\color{gray} 0.7} & {\color{gray} 1.4} & {\color{gray} 4.3}  & {\color{gray} 26.8} & {\color{gray} 24.8} & {\color{gray} \textbf{30.5}} & {\color{gray} 28.1} & {\color{gray} 24.4} & {\color{gray} 16.7} \\
{\color{gray} Mistral‑350M} & {\color{gray} 7.9}  & {\color{gray} 0.0} & {\color{gray} 2.1} & {\color{gray} 3.2}  & {\color{gray} 27.6} & {\color{gray} \textbf{26.6}} & {\color{gray} 25.0} & {\color{gray} 26.8} & {\color{gray} 23.9} & {\color{gray} 15.9} \\
{\color{gray} Griffin‑330M} & {\color{gray} 3.5}  & {\color{gray} 0.1} & {\color{gray} 0.2} & {\color{gray} \textbf{13.8}} & {\color{gray} 23.4} & {\color{gray} 25.4} & {\color{gray} 24.6} & {\color{gray} 27.1} & {\color{gray} 24.9} & {\color{gray} 15.9} \\
{\color{gray} Mamba‑370M}   & {\color{gray} 4.8}  & {\color{gray} 0.4} & {\color{gray} 1.7} & {\color{gray} 4.4}  & {\color{gray} 27.1} & {\color{gray} 25.8} & {\color{gray} 25.2} & {\color{gray} \textbf{28.2}} & {\color{gray} 24.1} & {\color{gray} 15.7} \\
{\color{gray} Mamba2‑370M}  & {\color{gray} 8.2}  & {\color{gray} 0.8} & {\color{gray} 1.8} & {\color{gray} 3.4}  & {\color{gray} \textbf{28.9}} & {\color{gray} 24.4} & {\color{gray} 22.6} & {\color{gray} 23.1} & {\color{gray} 25.4} & {\color{gray} 15.4} \\
{\color{gray} Zamba‑330M}   & {\color{gray} \textbf{19.7}} & {\color{gray} \textbf{6.4}} & {\color{gray} \textbf{2.4}} & {\color{gray} 9.1} & {\color{gray} 27.9} & {\color{gray} 24.1} & {\color{gray} 29.5} & {\color{gray} 26.5} & {\color{gray} \textbf{25.7}} & {\color{gray} \textbf{19.0}} \\ \bottomrule
\end{tabular}}
\label{tbl:fair_other_eval}
\end{table*}

\subsection{Training and Evaluation Setup}

To \emph{fairly} compare different architectures, we train a suite of models with varying number of parameters on the same language modeling dataset. Concretely, we compare the performance of various architectures. These include three popular transformer architectures: Pythia~\cite{pythia}, Llama~\cite{llama2}, Mistral~\cite{mistral} and its MoE extension Mixtral~\cite{mixtral}, a recently-proposed GRM, i.e., Griffin~\cite{griffin}, along with Mamba~\cite{mamba} and its extensions: Zamba~\cite{zamba} and MoE-Mamba~\cite{blackmamba,moe-mamba}. We also add recently-proposed Mamba2~\citep{mamba2} to our comparisons. We use the same BPE tokenizer for all models~\cite{gpt_neo_x}. We train these models on the Cosmopedia~\cite{cosmopedia} dataset, which has shown high model performance per pre-training token. We present additional model hyperparameters along with other training details in Appendix~\ref{app:hyperparameters}.

Following the \emph{fairly-trained} setting, we train a larger model, namely MossNet-8x200M+, on a custom dataset with 2.8T tokens comprised of a mixture of existing open-source datasets. We describe the hyperparameter choices and the training recipes employed for this model in Appendix~\ref{app:hyperparameters}. We provide details of the custom pre-training dataset in Appendix~\ref{app:custom_dataset}.

We compare the performance of the proposed MossNetsuite of model against baselines on various downstream benchmarks. 

\section{Experiments}
\label{sec:experiments}

In this section, we present experimental results comparing the proposed MossNet suite of models against \emph{fairly-trained} and state-of-the-art baselines.

\subsection{Downstream Language Modeling Performance}

First, we evaluate the MossNet architecture, along with other baselines, based on language modeling perplexity on the Cosmopedia dataset and consider eight standard commonsense reasoning benchmarks: ARC challenge (ARC-c) and ARC easy 
(ARC-e,~\citealt{arc}), 
BoolQ~\citep{boolq}, COPA~\citep{copa}, HellaSwag~\citep{hellaswag}, OpenBookQA 
(OBQA,~\citealt{obqa}), PIQA~\citep{piqa}, and WinoGrande~\citep{winogrande}. 
We perform evaluations in the zero-shot setting as done in the language modeling community. 
We \emph{fairly} train all models on the same dataset and under the same setting (more details in Appendix~\ref{app:hyperparameters}).

Fig.~\ref{fig:commonsense_ppl_scaling} shows how the performance scales for MossNet and other baselines, both dense and sparse. MossNet achieves lower perplexity and higher average commonsense accuracy, showing superior scaling across model sizes. This shows the advantages of multiple state-space ``heads'' in language modeling performance.

We also evaluate MossNet and other baselines on more benchmarks: infromation retrieval on SWDE and FDA~\citep{based}, closed-book question answering on TriviaQA~\citep{triviaqa}, reading comprehension on SQuADv2~\citep{squad} and RACE~\citep{race}, and general knowledge and reasoning on MMLU~\citep{mmlu}. Table~\ref{tbl:fair_other_eval} shows the results. MossNet outperforms baselines with similar number of active parameters on most benchmarks.

\begin{table*}
\centering
\caption{Performance of MossNet and state-of-the-art baselines on ARC-c (zero-shot accuracy), ARC-e (zero-shot accuracy), HellaSwag (zero-shot accuracy), PIQA (zero-shot accuracy), WinoGrande (zero-shot accuracy), SQuADv2 (zero-shot F1 score), and MMLU (five-shot accuracy) benchmarks. We evaluate the instruction-tuned models wherever available. *We evaluate all models except Hymba-350M (not publicly available) using \texttt{lm-evaluation-harness}~\citep{eval-harness}. For Hymba-350M, we present the reported results~\citep{hymba}.}
\resizebox{\linewidth}{!}{%
\begin{tabular}{@{}cl|c|ccccccc|c@{}}
\toprule
 & \textbf{Model} & \textbf{Train Tokens} & \textbf{ARC-c} & \textbf{ARC-e} & \textbf{HellaSwag} & \textbf{PIQA} & \textbf{WinoGrande} & \textbf{SQuADv2} & \textbf{MMLU} & \textbf{Average} \\
\midrule
\multirow{7}{*}{\rot[90]{$\sim$500M}} & Mamba-370M & 0.3T & 28.0 & 55.1 & 46.5 & 69.5 & 55.3 & 5.8 & 23.1 & 40.5 \\
 & Mamba2-370M & 0.3T & 26.9 & 54.9 & 46.9 & 70.5 & 55.7 & 5.9 & 23.6 & 40.6 \\
 & BlackMamba-1.5B & 0.3T & 24.1 & 56.1 & 36.5 & 69.0 & 52.6 & 4.7 & 19.4 & 37.5 \\
 & Hymba-350M* & 1.5T & - & - & 55.1 & 72.9 & 57.8 & - & 34.5 & - \\
 & Qwen2.5-0.5B & 18T & 34.2 & 59.8 & 53.0 & 70.9 & 56.5 & 12.3 & \textbf{47.4} & 47.7 \\
 & SmolLM2-360M & 11T & 34.3 & 49.6 & 57.3 & 72.0 & 57.8 & 9.3 & 26.2 & 43.8 \\
\rowcolor{gray!20} & MossNet-8x200M+ (top-2) & 2.8T & \textbf{41.4} & \textbf{68.4} & \textbf{63.9} & \textbf{76.1} & \textbf{62.5} & \textbf{20.9} & 41.2 & \textbf{53.5} \\
\midrule
\multirow{4}{*}{\rot[90]{$\sim$700M}} & Mamba-790M & 0.3T & 29.5 & 61.2 & 55.1 & 72.1 & 56.1 & 8.5 & 24.0 & 43.8 \\
 & Mamba2-790M & 0.3T & 28.5 & 61.0 & 54.9 & 72.0 & 60.2 & 8.7 & 24.4 & 44.2 \\
 & BlackMamba-2.8B & 0.3T & 24.5 & 60.3 & 39.7 & 71.2 & 52.1 & 6.8 & 22.7 & 39.6 \\
\rowcolor{gray!20} & MossNet-8x200M+ (top-3) & 2.8T & \textbf{40.2} & \textbf{69.8} & \textbf{65.9} & \textbf{76.3} & \textbf{64.2} & \textbf{28.1} & \textbf{43.6} & \textbf{55.4} \\
\midrule
\multirow{6}{*}{\rot[90]{{\color{gray}$\sim$1.5B}}} & {\color{gray} Rene-1.3B} & {\color{gray}1.5T} & {\color{gray}34.4} & {\color{gray}61.7} & {\color{gray}69.4} & {\color{gray}77.5} & {\color{gray}62.9} & {\color{gray}17.4} & {\color{gray}32.6} & {\color{gray}50.8} \\
 & {\color{gray}Hymba-1.5B} & {\color{gray}1.5T} & {\color{gray}44.3} & {\color{gray}76.0} & {\color{gray}71.1} & {\color{gray}77.4} & {\color{gray}66.6} & {\color{gray}20.3} & {\color{gray}52.3} & {\color{gray}58.3} \\
 & {\color{gray}Phi1.5} & {\color{gray}0.15T} & {\color{gray}48.5} & {\color{gray}73.9} & {\color{gray}62.6} & {\color{gray}76.4} & {\color{gray}73.6} & {\color{gray}19.4} & {\color{gray}42.9} & {\color{gray}56.8} \\
 & {\color{gray}DCLM-1.4B} & {\color{gray}4.3T} & {\color{gray}47.5} & {\color{gray}75.7} & {\color{gray}73.2} & {\color{gray}78.5} & {\color{gray}67.5} & {\color{gray}28.9} & {\color{gray}51.6} & {\color{gray}60.4} \\
 & {\color{gray}Qwen2.5-1.5B} & {\color{gray}18T} & {\color{gray}47.0} & {\color{gray}76.7} & {\color{gray}69.0} & {\color{gray}76.8} & {\color{gray}63.5} & {\color{gray}25.2} & {\color{gray}60.9} & {\color{gray}59.9} \\
 & {\color{gray}SmolLM2-1.7B} & {\color{gray}11T} & {\color{gray}44.1} & {\color{gray}63.6} & {\color{gray}72.6} & {\color{gray}77.0} & {\color{gray}69.1} & {\color{gray}19.3} & {\color{gray}49.9} & {\color{gray}56.5} \\
\bottomrule
\end{tabular}%
}
\label{tbl:1b5_eval}
\end{table*}

Finally, we train MossNet-8x200M+ for 2.8T tokens on a custom pretraining dataset and compare it against state-of-the-art baselines. We trained MossNet-8x200M+ to support both top-2 and top-3 modes, resulting in 477M and 657M active parameters, respectively (more details in Section~\ref{app:hyperparameters}). This allows the same model to support low-power and high-power models on-device.
Table~\ref{tbl:1b5_eval} shows the results. In the top-2 mode, we compare MossNet with Mamba-370M~\citep{mamba}, Mamba2-370M~\citep{mamba2}, BlackMamba-1.5B~\citep{blackmamba}, Hymba-350M~\citep{hymba}, Qwen2.5-0.5B~\citep{qwen2.5}, and SmolLM2-360M~\citep{smollm2}. MossNet-8x200M+ outperforms Qwen2.5-0.5B by 5.8\% average accuracy. In the top-3 mode, we compare MossNet with Mamba-790M, Mamba2-790M, and BlackMamba-2.8B. We also see notable improvement going from top-2 to top-3 gating. Thanks to the proposed MoE design and training strategy, MossNet exhibits flexibility in different active-parameter-constrained settings, unlike static models. We also present the performance of state-of-the-art models with active parameters around 1.5B. MossNet not only outperforms baselines with similar active parameters, but also reaches the performance of other models with 1.5B active parameters, {\color{black} while achieving \emph{significant} latency and memory gains as we show next}. 




\subsection{Speed and Memory Profiling}

In this section we present the memory and speed profiling results on server (Nvidia A100-80GB GPU) and on mobile (Samsung Galazy S24 Ultra).

\subsubsection{Server GPU Results}

\begin{figure*}[t!]
    \centering
    \includegraphics[width=\linewidth]{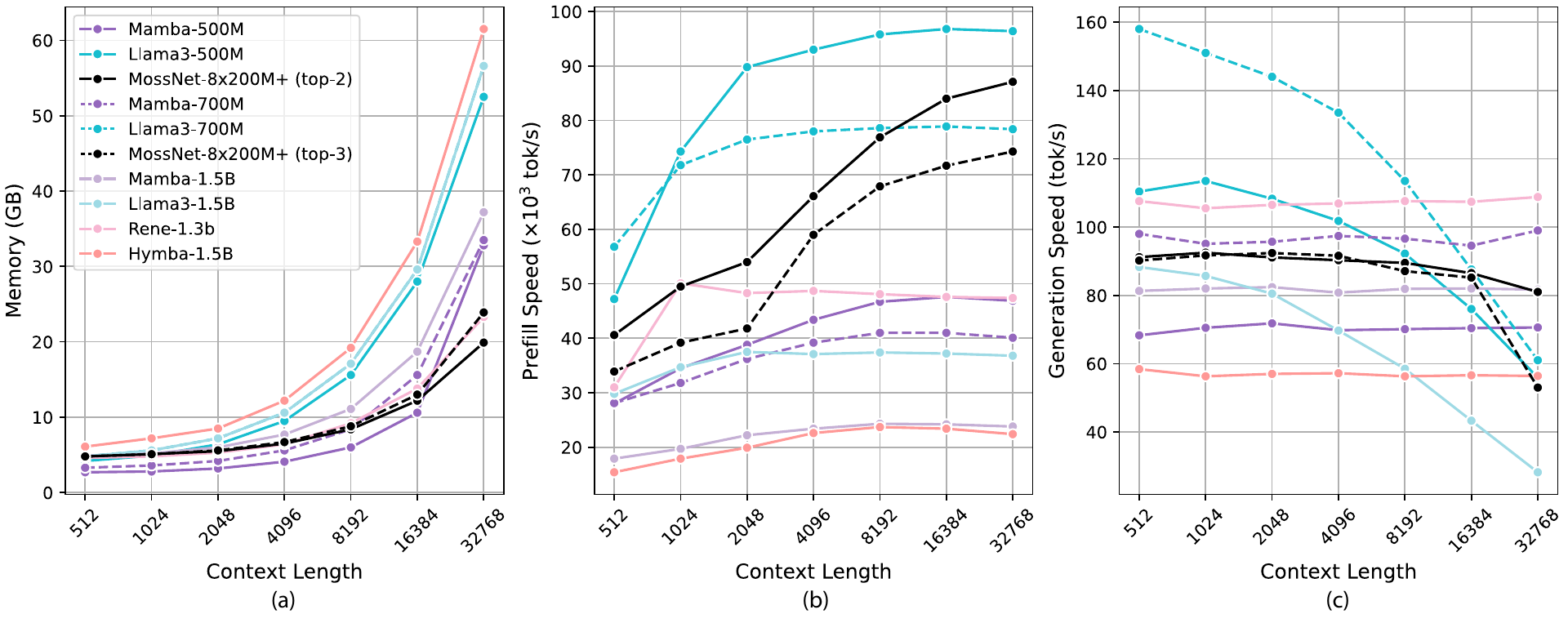}
    \caption{(a) Memory consumption, (b) prefill speed, and (c) generation speed with context length for MossNet-8x200M+ and baselines on A100-80GB (FP16 precision, FlashAttention 2). Batch size set to 4.}
    \label{fig:gpu_speed_memory}
\end{figure*}

Fig.~\ref{fig:gpu_speed_memory} presents (a) memory consumption, (b) prefill speed, and (c) generation speed across increasing context lengths for MossNet-8×200M+ compared to several single‐expert baselines of similar active and total parameter scale. While all models naturally require more GPU memory as context length grows, MossNet’s MoE design contains that growth more effectively, keeping memory usage lower than monolithic baselines with comparable or larger parameter counts. Particularly, for longer contexts, e.g. 32K, MossNet-8x200M+ in top-2 mode achieves the lowest memory usage relative to baselines. MossNet also demonstrates consistently high prefill throughput. Its prefill speed approaches that of Llama3-500M/700M, being far superior to other SMM/hybrid baselines.
Further, as shown in Fig~\ref{fig:gpu_speed_memory}(c), MossNet’s token‐by‐token generation speed remains stable across large contexts, whereas competing baselines often slow down significantly. In short, these GPU‐based results highlight the key advantages of expert routing: more efficient memory usage and stronger large‐context performance, without sacrificing speed.

\begin{figure*}
    \centering
    \includegraphics[width=\linewidth]{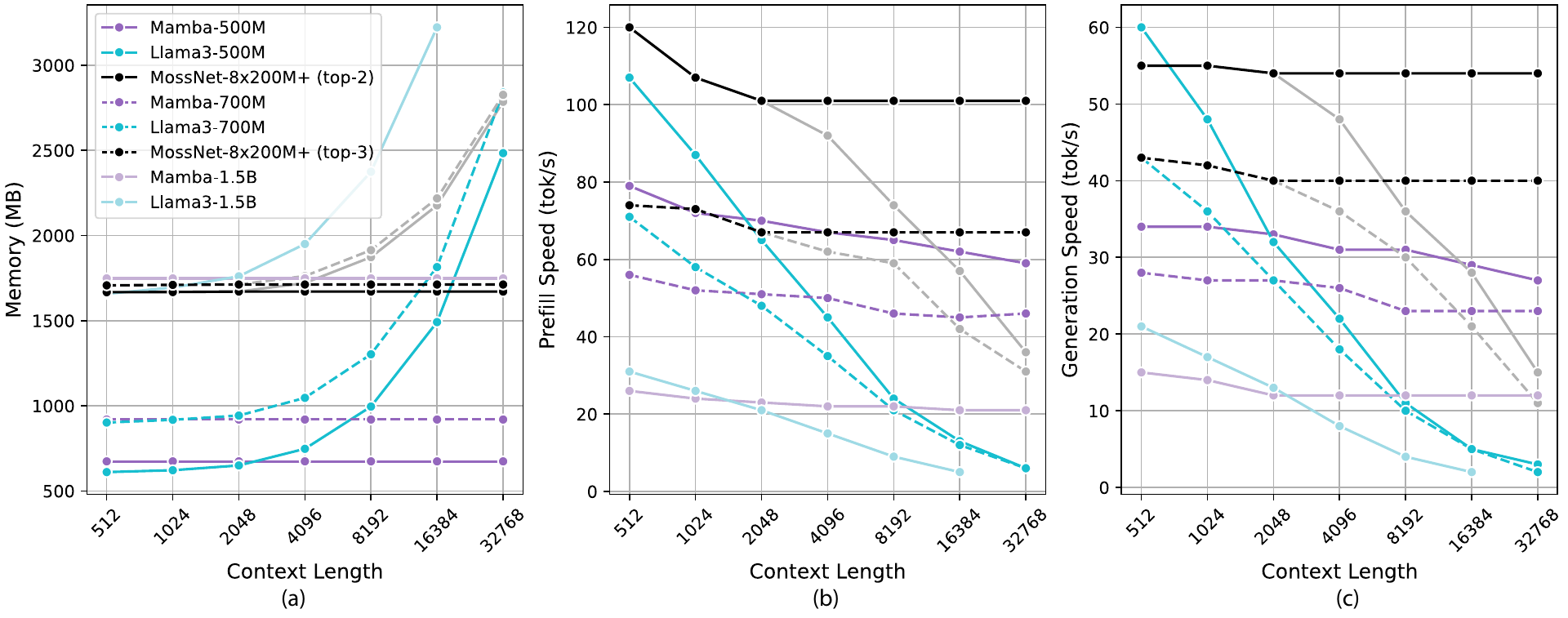}
    \caption{(a) Memory consumption, (b) prefill speed, and (c) generation speed with context length for MossNet-8x200M+ and baselines on Samsung Galaxy S24 Ultra (Q8 precision). Batch size set to 1. Gray line plots depict performance without SWA implemented. Llama3-1.5B results not plotted for 32K context due to out-of-memory error.}
    \label{fig:device_speed_memory}
\end{figure*}

\subsubsection{Mobile Results}

Fig.~\ref{fig:device_speed_memory} illustrates the same three metrics on a Samsung Galaxy S24 Ultra (on CPU with Q8 precision) for a batch size of 1, further underscoring MossNet’s benefits in resource‐constrained edge settings. Here, MossNet’s memory footprint stays essentially flat at around 1.6 GB across all context lengths, while the Llama3 models consume increasingly large amounts of memory as the context grows. Mamba too has a flat memory curve due to serial operation of the scan operation on-device. MossNet’s prefill and generation speeds remain comfortably higher and more consistent than those of baselines, which degrade more severely as context length increases. The drop in prefill speed on mobile device (unlike on server GPU) could be attributed to the lower compute capacity for parallel processing. The stable performance and reduced resource use make MossNet especially suitable for on‐device inference scenarios, where users often demand responsiveness and must operate under strict memory and compute constraints.

\subsection{Architecture Modifications}

\begin{table}[]
\centering
\caption{Effect of architectural modifications to MossNet. Perplexity reported on Cosmopedia evaluation set.}
\resizebox{\linewidth}{!}{
\begin{tabular}{@{}l|ccc@{}}
\toprule
\textbf{Model} & \textbf{Tot. Param. (M)} & \textbf{Act. Param. (M)} & \textbf{PPL} \\ \midrule
MossNet-8x8M & 19.7 & 9.9 & 13.1 \\
\; w/o MHA & 21.4 & 10.3 & 13.5 \\
\; w/o MLP-MoE & 19.1 & 9.8 & 13.4 \\
\; w/ top-1, 8 experts & 19.7 & 8.3 & 15.3 \\
\; w/ top-4, 8 experts & 19.7 & 13.2 & 12.6 \\ 
\; w/ top-2, 4 experts & 13.1 & 9.9 & 14.4 \\ 
\; w/ top-2, 16 experts & 32.7 & 9.9 & 12.0 \\ \bottomrule
\end{tabular}}
\label{tbl:ablation}
\end{table}


We now test various modifications to the proposed MossNet architecture. We study the {\color{black}\emph{relative}} effect of removing MHA, MLP-MoE, and varying the number of total and activated experts. Table~\ref{tbl:ablation} summarizes the results. The proposed MossNet-8x8M achieves a PPL of 13.1 with 19.7M total parameters and 9.9M active parameters, demonstrating the effective use of MHA and MLP-MoE. Removing MHA increases parameters count and leads to modest performance drop (PPL = 13.5), while removing MLP-MoE yields fewer parameters but also worse perplexity (13.4). 

Next, we test the effect of varying the number of total and activated experts. The table highlights a trade-off: activating fewer experts (e.g., top-1) can greatly hurt perplexity (up to 15.3), while activating more experts (e.g., top-4 with 8 experts) can reduce PPL to 12.6, at the cost of a higher active parameter count (resulting in higher memory and compute). Notably, employing 16 experts with top-2 activation gives the best perplexity (12.0) but increases the total parameter count to 32.7M, illustrating how scaling the MoE approach can yield lower perplexity with more overall model capacity.

\section{Conclusion}
\label{sec:conclusion}

In this paper, we introduced \textbf{MossNet}, a mixture-of-state-space-experts architecture designed to emulate a linear MHA within an SSM. By integrating MoE in both channel-mixing (MLP) and time-mixing (SSM) components, MossNet captures different temporal focus or scale of context, providing a richer representation than a single set of SSM parameters could. This is akin to the MHA mechanism in transformers. Our theoretical analysis shows that this approach indeed recovers a linearized form of MHA, and our empirical study on language modeling and downstream tasks demonstrates that MossNet outperforms both transformer-based and prior SSM/GRM-based baselines. Large-scale experiments further highlight its scalability and practical runtime benefits. We believe MossNet represents an important step toward fully harnessing recurrent models for language modeling at scale, opening up new directions for efficient, flexible, and high-performing LLM architectures.

\section{Limitations}
\label{sec:limitations}

Despite the several advantages of the proposed MossNet architecture, there are several limitations. First, the integration of the MoE framework within both channel-mixing and time-mixing components of state-space models introduces considerable architectural complexity. This may present challenges for replication and broader adoption in the research community without specialized knowledge. MoEs do not effectively improve inference performance on server, when the input is a batch of user requests containing different tasks. Further, we evaluate MossNet on MLP tasks. We leave evaluation on more diverse downstream tasks such as multi-modal understanding, real-time applications, and specialized domains to future work. Finally, although MossNet shows promising results on mobile devices like the Samsung Galaxy S24 Ultra, performance across other hardware configurations, especially those with different architectures or constraints, may vary. Future work could explore adaptive optimizations tailored to specific hardware platforms.

\bibliography{biblio}
\bibliographystyle{acl_natbib}

\appendix

\begin{table*}[t!]
\centering
\caption{Key model architecture hyperparameters and training recipes for various baseline architectures (Pythia, Llama, Mistral, Mixtral, Griffin, Mamba, Mamba2, Zamba, and MoE-Mamba) alongside the proposed MossNet family of models. The table displays model sizes, dimensions, training tokens, context lengths, and learning rate schedules, among other relevant settings. $\alpha$ corresponds to the weight factor for load balancing loss. *Unlike MossNet-8x200M+ that was dynamically trained in top-2 and top-3 modes, smaller models were only trained in top-2 mode.}
\resizebox{\linewidth}{!}{
\begin{tabular}{@{}l|cc|ccccccc|ccccccc@{}}
\toprule
\textbf{Model} & \rot[90]{\textbf{Tot. Param. (M)}} & \rot[90]{\textbf{Act. Param. (M)}} & \rot[90]{\textbf{Hidden Dim.}} & \rot[90]{\textbf{Intermediate Dim.}} & \rot[90]{\textbf{Num. Attn. Heads}} & \rot[90]{\textbf{Num. K/V Heads}} & \rot[90]{\textbf{Sliding Window}} & \rot[90]{\textbf{Num. Layers}} & \rot[90]{\textbf{Tie Embeddings}} & \rot[90]{\textbf{Train. Tokens (B)}} & \rot[90]{\textbf{Context Length}} & \rot[90]{\textbf{Max. LR}} & \rot[90]{\textbf{Schedule}} & \rot[90]{\textbf{Warmup}} & \rot[90]{\textbf{Final LR Ratio}} & \rot[90]{\textbf{$\alpha$}} \\ \midrule
Pythia-9M & 8.8 & 8.8 & 128 & 512 & 2 & - & - & 12 & T & 22 & 2048 & 2.0e-3 & Cosine & 3\% & 0\% & - \\
Llama-8M & 8.2 & 8.2 & 128 & 448 & 2 & 1 & - & 8 & T & 22 & 2048 & 2.0e-3 & Cosine & 3\% & 0\% & - \\
Mistral-8M & 8.2 & 8.2 & 128 & 448 & 2 & 1 & 256 & 8 & T & 22 & 2048 & 2.0e-3 & Cosine & 3\% & 0\% & - \\
Mixtral-8x8M & 17.8 & 9.6 & 128 & 448 & 2 & 1 & 256 & 8 & T & 22 & 2048 & 2.0e-3 & Cosine & 3\% & 0\% & 0.001 \\
Griffin-9M & 8.7 & 8.7 & 128 & 384 & 2 & 1 & - & 16 & T & 22 & 2048 & 1.0e-2 & Cosine & 3\% & 0\% & - \\
Mamba-8M & 8.3 & 8.3 & 128 & 448 & - & - & - & 16 & T & 22 & 2048 & 1.0e-2 & Cosine & 3\% & 0\% & - \\
Mamba2-9M & 8.6 & 8.6 & 128 & 256 & - & - & - & 16 & T & 22 & 2048 & 1.0e-2 & Cosine & 3\% & 0\% & - \\
Zamba-8M & 8.1 & 8.1 & 128 & 448 & 2 & 1 & - & 16 & T & 22 & 2048 & 1.0e-2 & Cosine & 3\% & 0\% & - \\
MoE-Mamba-8x8M & 16.8 & 9.7 & 128 & 384 & - & - & - & 16 & T & 22 & 2048 & 1.0e-2 & Cosine & 3\% & 0\% & 0.001 \\
\rowcolor{gray!20} 
MossNet-8x8M* & 19.7 & 9.9 & 128 & 384 & 2 & 1 & - & 16 & T & 22 & 2048 & 1.0e-2 & Cosine & 3\% & 0\% & 0.001 \\ \midrule
Pythia-22M & 22.3 & 22.3 & 256 & 1,024 & 4 & - & - & 12 & T & 22 & 2048 & 1.0e-3 & Cosine & 3\% & 0\% & - \\
Llama-20M & 20 & 20 & 256 & 896 & 4 & 2 & - & 8 & T & 22 & 2048 & 1.0e-3 & Cosine & 3\% & 0\% & - \\
Mistral-20M & 20 & 20 & 256 & 896 & 4 & 2 & 256 & 8 & T & 22 & 2048 & 1.0e-3 & Cosine & 3\% & 0\% & - \\
Mixtral-8x20M & 58.5 & 25.5 & 256 & 896 & 4 & 2 & 256 & 8 & T & 22 & 2048 & 1.0e-3 & Cosine & 3\% & 0\% & 0.001 \\
Griffin-22M & 22 & 22 & 256 & 768 & 4 & 2 & - & 16 & T & 22 & 2048 & 5.0e-3 & Cosine & 3\% & 0\% & - \\
Mamba-20M & 19.9 & 19.9 & 256 & 896 & - & - & - & 16 & T & 22 & 2048 & 5.0e-3 & Cosine & 3\% & 0\% & - \\
Mamba2-20M & 20.4 & 20.4 & 256 & 512 & - & - & - & 16 & T & 22 & 2048 & 5.0e-3 & Cosine & 3\% & 0\% & - \\
Zamba-20M & 19.5 & 19.5 & 256 & 896 & 4 & 2 & - & 16 & T & 22 & 2048 & 5.0e-3 & Cosine & 3\% & 0\% & - \\
MoE-Mamba-8x20M & 54.1 & 25.8 & 256 & 768 & - & - & - & 16 & T & 22 & 2048 & 5.0e-3 & Cosine & 3\% & 0\% & 0.001 \\
\rowcolor{gray!20} 
MossNet-8x20M* & 63.9 & 26.1 & 256 & 768 & 4 & 2 & - & 16 & T & 22 & 2048 & 5.0e-3 & Cosine & 3\% & 0\% & 0.001 \\ \midrule
Pythia-64M & 63.6 & 63.6 & 512 & 2048 & 8 & - & - & 12 & T & 22 & 2048 & 1.0e-3 & Cosine & 3\% & 0\% & - \\
Llama-67M & 66.7 & 66.7 & 512 & 1792 & 8 & 2 & - & 12 & T & 22 & 2048 & 1.0e-3 & Cosine & 3\% & 0\% & - \\
Mistral-67M & 66.7 & 66.7 & 512 & 1792 & 8 & 2 & 256 & 12 & T & 22 & 2048 & 1.0e-3 & Cosine & 3\% & 0\% & - \\
Mixtral-8x67M & 320.6 & 105.9 & 512 & 1792 & 8 & 2 & 256 & 12 & T & 22 & 2048 & 1.0e-3 & Cosine & 3\% & 0\% & 0.001 \\
Griffin-61M & 61.1 & 61.1 & 512 & 1792 & 8 & 2 & - & 16 & T & 22 & 2048 & 5.0e-3 & Cosine & 3\% & 0\% & - \\
Mamba-66M & 66.4 & 66.4 & 512 & 1792 & - & - & - & 24 & T & 22 & 2048 & 5.0e-3 & Cosine & 3\% & 0\% & - \\
Mamba2-67M & 67.2 & 67.2 & 512 & 1024 & - & - & - & 24 & T & 22 & 2048 & 5.0e-3 & Cosine & 3\% & 0\% & - \\
Zamba-62M & 62.1 & 62.1 & 512 & 1792 & 8 & 2 & - & 24 & T & 22 & 2048 & 5.0e-3 & Cosine & 3\% & 0\% & - \\
MoE-Mamba-8x66M & 272.6 & 102.8 & 512 & 1536 & 8 & 2 & - & 24 & T & 22 & 2048 & 5.0e-3 & Cosine & 3\% & 0\% & 0.001 \\
\rowcolor{gray!20} 
MossNet-8x66M* & 325.9 & 102.9 & 512 & 1536 & 8 & 2 & - & 24 & T & 22 & 2048 & 5.0e-3 & Cosine & 3\% & 0\% & 0.001 \\ \midrule
Pythia-330M & 328.6 & 328.6 & 1024 & 4096 & 16 & - & - & 22 & T & 22 & 2048 & 3.0e-4 & Cosine & 3\% & 0\% & - \\
Llama-350M & 351.4 & 351.4 & 1024 & 3584 & 16 & 4 & - & 22 & T & 22 & 2048 & 3.0e-4 & Cosine & 3\% & 0\% & - \\
Mistral-350M & 351.4 & 351.4 & 1024 & 3584 & 16 & 4 & 512 & 22 & T & 22 & 2048 & 3.0e-4 & Cosine & 3\% & 0\% & - \\
Griffin-330M & 330.4 & 330.4 & 1024 & 3584 & 16 & 4 & - & 32 & T & 22 & 2048 & 1.5e-3 & Cosine & 3\% & 0\% & - \\
Mamba-370M & 371.5 & 371.5 & 1024 & 3584 & - & - & - & 48 & T & 22 & 2048 & 1.5e-3 & Cosine & 3\% & 0\% & - \\
Mamba2-370M & 369.9 & 369.9 & 1024 & 2048 & - & - & - & 48 & T & 22 & 2048 & 1.5e-3 & Cosine & 3\% & 0\% & - \\
Zamba-330M & 334.1 & 334.1 & 1024 & 3584 & 16 & 4 & - & 48 & T & 22 & 2048 & 1.5e-3 & Cosine & 3\% & 0\% & - \\ \midrule 
\rowcolor{gray!20} 
MossNet-8x200M+ & 1554.5 & 477/657 & 1024 & 3072 & 16 & 4 & 2048 & 30 & F & 2800 & 4096 & 2.0e-4 & WSD & 1\% & 10\% & 0.001 \\ \bottomrule
\end{tabular}}
\label{tbl:arch_hyperparameters}
\end{table*}

\section{Model Hyperparameters and Training Recipes}
\label{app:hyperparameters}

In this section, we provide details on the various model architecture hyperparameters and corresponding training recipes for the MossNet suite of models and baselines at different parameter scales.

Table~\ref{tbl:arch_hyperparameters} summarizes the design choices. Each row corresponds to a particular model variant, sorted by approximate total parameter count. The key columns indicate:
\begin{itemize}
    \item Total and active parameters.
    \item Hidden and intermediate dimensions for the MLP layers.
    \item Total number of attention heads and K/V heads (for grouped-query attention;~\citealt{gqa}).
    \item Sliding window size, if applied.
    \item Number of layers.
    \item Tie embeddings, i.e., whether input and output embeddings are tied.
    \item Total number of training tokens.
    \item Context length used for training.
    \item Other training hyperparameters, including $\alpha$, i.e., the weight factor used for load balancing loss in MoE architectures.
\end{itemize}
We group models by approximate size categories, illustrating how scaling up parameters impacts the choice of dimensionality and training regimes. 

Note that we train MossNet-8x200M+ in a dynamic setting. We train the model in top-3 mode for 900 steps and in top-2 mode for 100 steps and repeat the cycle. All models are trained on the Cosmopedia dataset (fair training setting), except MossNet-8x200M+ that we train on a custom pretraining dataset.

\section{Custom Pre-training Dataset}
\label{app:custom_dataset}

Table~\ref{tbl:pretraining_dataset} shows the mixture of open datasets that form the custom pre-training data mix for MossNet-8x200M+. We combine DCLM-baseline~\citep{dclm}, Starcoder~\citep{starcoder}, Proof-Pile-2~\citep{proofpile2}, peS2o~\citep{dolma}, and Cosmopedia-2~\citep{smollmcorpus} with different sampling weights.

\begin{table}[t!]
\caption{Composition of pre-training data for MossNet-8x200M+.}
\centering
\resizebox{\linewidth}{!}{
\begin{tabular}{@{}lccc@{}}
\toprule
\textbf{Dataset} & \textbf{Num. Tok. (B)} & \textbf{Train Tok. (B)} & \textbf{Samp. Wgt.} \\ \midrule
DCLM-baseline & 4000 & 2520 & 0.90 \\
Starcoder & 250 & 168 & 0.06 \\
Proof-Pile-2 & 55 & 28 & 0.01 \\
peS2o & 47 & 28 & 0.01 \\
Cosmopedia-2 & 28 & 56 & 0.02 \\ \midrule
\multicolumn{2}{l}{Total} & 2800 & 1.00 \\ \bottomrule
\end{tabular}}
\label{tbl:pretraining_dataset}
\end{table}

\section{Additional Results}

In this section, we present additional results.

\subsection{Commonsense Performance of Fairly-trained Models}

\begin{table*}[t]
\caption{Zero-shot performance of MossNet and fairly-trained baselines on commonsense tasks.}
\centering
\resizebox{0.85\linewidth}{!}{
\begin{tabular}{@{}l|ccccccc|c@{}}
\toprule
\textbf{Model} & \textbf{ARC-c} & \textbf{ARC-e} & \textbf{COPA} & \textbf{HellaSwag} & \textbf{OBQA} & \textbf{PIQA} & \textbf{WinoGrande} & \textbf{Average} \\ \midrule
Pythia-9M       & 18.3          & 32.6          & \textbf{57.0} & 26.5          & \textbf{14.6} & 56.2          & 50.8          & 36.6          \\
Llama-8M        & 18.8          & 34.0          & 55.0          & 26.5          & 11.6          & 56.6          & \textbf{52.8} & 36.5          \\
Mistral-8M      & 19.2          & 33.5          & 56.0          & 26.4          & 11.2          & 56.4          & 50.1          & 36.1          \\
Mixtral-8x8M    & 19.5 & \textbf{35.0} & 53.0          & 27.1          & 13.0          & 56.8          & 50.3          & 36.4          \\
Griffin-9M      & 19.1          & 32.2          & 48.0          & 26.0          & 14.6          & 56.2          & 50.8          & 35.3          \\
Mamba-8M        & 19.3          & 34.0          & 51.0          & 26.4          & 13.2          & 58.0          & 51.5          & 36.2          \\
Mamba2-9M        & \textbf{19.8}          & 34.8          & 52.0          & 26.5          & 12.2          & 57.3          & 51.0          & 36.1          \\
Zamba-8M        & 18.1          & 33.2          & 53.0          & 26.2          & 14.0          & 57.4          & 50.9          & 36.1          \\
MoE-Mamba-8x8M  & 18.5          & 34.7          & 53.0          & 25.8          & 15.2          & 55.0          & 47.4          & 35.6          \\
\rowcolor{gray!20} 
MossNet-8x8M      & 18.6          & 34.5          & 55.0          & \textbf{27.7} & 14.0         & \textbf{59.6} & 49.8          & \textbf{37.1} \\ \midrule
Pythia-22M      & 19.8          & 37.8          & 63.0          & 27.1          & 13.4          & 56.6          & 50.6          & 38.3          \\
Llama-20M       & 17.9          & 35.4          & 56.0          & 26.9          & 11.2          & 58.2          & 52.1          & 36.8          \\
Mistral-20M     & 20.0          & 35.9          & 54.0          & 26.8          & 13.8          & 58.8          & 50.4          & 37.1          \\
Mixtral-8x20M   & 19.5          & 36.2          & 54.0          & 28.2          & 14.4          & 59.1          & 48.9          & 37.2          \\
Griffin-22M     & 20.2          & 29.5          & \textbf{57.0} & 26.5          & 14.4          & 56.4          & 49.7          & 36.2          \\
Mamba-20M       & 18.1          & \textbf{39.0} & 55.0          & 27.0          & 15.2          & 59.5          & 51.4          & 37.9          \\
Mamba2-20M       & 18.3          & 36.2 & 55.0          & 27.3          & 14.4          & 57.9          & 50.8          & 37.1          \\
Zamba-20M       & 20.3          & 38.5          & 55.0          & 27.2          & 14.4          & 58.6          & \textbf{52.3} & 38.0          \\
MoE-Mamba-8x20M & 19.8          & 38.1          & 53.0          & 27.5          & 16.8          & 58.8          & 48.2          & 37.5          \\
\rowcolor{gray!20} 
MossNet-8x20M     & \textbf{20.7} & 38.8          & \textbf{57.0} & \textbf{28.8} & \textbf{15.6} & \textbf{61.2} & 50.5          & \textbf{38.9} \\ \midrule
Pythia-64M      & 20.4          & 41.5          & 51.0          & 28.3         & 15.6          & 61.8          & \textbf{51.5} & 38.6          \\
Llama-67M       & 21.1          & 41.8          & 56.0          & 28.8          & 17.2          & 60.3          & 51.3          & 39.5          \\
Mistral-67M     & 20.6          & 40.7          & 57.0          & 28.2          & 17.4          & 61.8          & 49.7          & 39.4          \\
Mixtral-8x67M   & 22.4          & \textbf{46.2} & \textbf{58.0} & 31.3          & 17.0          & 63.2          & 50.5          & 41.2          \\
Griffin-61M     & 21.9          & 30.2          & 56.0          & 26.8          & 16.8          & 57.7          & 51.0          & 37.2          \\
Mamba-66M       & 21.2          & 44.4          & 48.0          & 29.0          & 17.4          & 63.0          & 50.8          & 39.1          \\
Mamba2-67M       & 21.2          & 44.2          & 56.0          & 29.2          & 16.8          & 62.3          & 45.6          & 39.3          \\
Zamba-62M       & 21.3          & 40.8          & 54.0          & 28.8          & 15.6          & 61.1          & 49.6          & 38.7          \\
MoE-Mamba-8x66M & 22.4          & 45.2          & 56.0          & 30.4          & 18.6          & 61.8          & 48.7          & 40.4          \\
\rowcolor{gray!20} 
MossNet-8x66M     & \textbf{22.6} & 45.1          & \textbf{58.0} & \textbf{31.5} & \textbf{19.4} & \textbf{65.5} & 51.0          & \textbf{41.9} \\ \midrule
Pythia-330M     & 21.8          & 45.2          & 55.0          & 29.5          & 16.6          & 62.4          & 51.1          & 40.2          \\
Llama-350M      & 23.2          & 45.2          & 58.0          & 30.6          & 16.2          & 63.2          & 50.1          & 40.9          \\
Mistral-350M    & 20.9          & 46.1          & 55.0          & 30.4          & 19.2          & 61.6          & 50.1          & 40.5          \\
Griffin-330M    & 20.3          & 36.7          & 58.0          & 29.4          & 19.4          & 61.0          & 49.1          & 39.1          \\
Mamba-370M      & 23.9          & \textbf{55.8} & 59.0 & 33.0 & 20.2 & \textbf{66.5} & 51.6          & \textbf{44.3} \\
Mamba2-370M      & \textbf{25.4}          & 50.4 & \textbf{60.0} & \textbf{33.1} & \textbf{21.2} & 65.1 & 51.9          & 43.9 \\
Zamba-330M      & 24.2 & 48.9          & 56.0          & 32.6          & 19.4          & 64.6          & \textbf{53.3} & 42.7          \\ \bottomrule
\end{tabular}}
\label{tbl:fair_commonsense_eval}
\end{table*}

Fig.~\ref{fig:commonsense_ppl_scaling} summarizes the commonsense performance of MossNet and baseline models. Table~\ref{tbl:fair_commonsense_eval} presents the detailed results. Again, MossNet outperforms baseline architectures at different active parameter scales. {\color{black} We scale parameter sizes up to 100M and leave experiments on larger models to future work.}

\subsection{Speed and Memory Results}




\begin{table*}
\centering
\caption{Memory (GB) of various models on A100-80 GPU (F16 precision, FlashAttention 2) across varying prompt lengths. Batch sizes are denoted as \textbf{1} and \textbf{4}.}
\label{tbl:gpu_eff_memory}
\resizebox{\linewidth}{!}{%
\begin{tabular}{@{}cl|ccccccc|ccccccc@{}}
\toprule
 & \textbf{Model} & \multicolumn{7}{c|}{\textbf{Batch Size 1}} & \multicolumn{7}{c}{\textbf{Batch Size 4}} \\
\cmidrule(lr){3-9}\cmidrule(l){10-16}
 & & \textbf{512} & \textbf{1024} & \textbf{2048} & \textbf{4096} & \textbf{8192} & \textbf{16384} & \textbf{32768} & \textbf{512} & \textbf{1024} & \textbf{2048} & \textbf{4096} & \textbf{8192} & \textbf{16384} & \textbf{32768} \\
\midrule
\multirow{3}{*}{\rotatebox[origin=c]{90}{$\sim$500M}} & Mamba-500M & 2.5 & 2.5 & 2.6 & 2.8 & 3.4 & 4.5 & 8.9 & 2.7 & 2.8 & 3.2 & 4.1 & 6.0 & 10.6 & 32.8 \\
 & Llama3-500M & 3.6 & 3.8 & 4.2 & 4.9 & 6.4 & 9.5 & 15.6 & 4.2 & 4.9 & 6.4 & 9.5 & 15.6 & 28.0 & 52.5 \\ 
 & \cellcolor{gray!20} MossNet-8x200M+ (top-2) 
   & \cellcolor{gray!20} 4.6   & \cellcolor{gray!20} 4.7   & \cellcolor{gray!20} 4.8   & \cellcolor{gray!20} 5.0 
   & \cellcolor{gray!20} 5.5   & \cellcolor{gray!20} 6.5   & \cellcolor{gray!20} 8.4   & \cellcolor{gray!20} 4.8
   & \cellcolor{gray!20} 5.1   & \cellcolor{gray!20} 5.5   & \cellcolor{gray!20} 6.5   & \cellcolor{gray!20} 8.4
   & \cellcolor{gray!20} 12.2  & \cellcolor{gray!20} 19.9 \\
\midrule
\multirow{3}{*}{\rotatebox[origin=c]{90}{$\sim$700M}} & Mamba-700M & 3.1 & 3.2 & 3.3 & 3.6 & 4.3 & 6.1 & 10.0 & 3.3 & 3.6 & 4.2 & 5.6 & 8.4 & 15.6 & 33.5 \\ 
 & Llama3-700M & 4.3 & 4.5 & 4.8 & 5.6 & 7.2 & 10.6 & 17.1 & 4.8 & 5.6 & 7.2 & 10.6 & 17.1 & 29.6 & 56.6 \\ 
 & \cellcolor{gray!20} MossNet-8x200M+ (top-3) 
   & \cellcolor{gray!20} 4.6   & \cellcolor{gray!20} 4.7   & \cellcolor{gray!20} 4.8   & \cellcolor{gray!20} 5.1
   & \cellcolor{gray!20} 5.6   & \cellcolor{gray!20} 6.7   & \cellcolor{gray!20} 8.8   & \cellcolor{gray!20} 4.8
   & \cellcolor{gray!20} 5.1   & \cellcolor{gray!20} 5.6   & \cellcolor{gray!20} 6.7   & \cellcolor{gray!20} 8.8
   & \cellcolor{gray!20} 13.0  & \cellcolor{gray!20} 23.9 \\ 
\midrule 
\multirow{4}{*}{\rotatebox[origin=c]{90}{$\sim$1.5B}} & Mamba-1.5B & 4.6 & 4.6 & 4.8 & 5.2 & 6.1 & 8.3 & 12.6 & 4.9 & 5.2 & 6.0 & 7.7 & 11.1 & 18.7 & 37.2 \\ 
 & Rene-1.3B & 4.6 & 4.6 & 4.6 & 4.6 & 5.1 & 6.4 & 9.2 & 4.6 & 4.8 & 5.3 & 6.4 & 9.2 & 13.8 & 23.3 \\ 
 & Hymba-1.5B  & 5.0 & 5.3 & 5.9 & 6.9 & 8.4 & 12.0 & 19.0 & 6.1 & 7.2 & 8.5 & 12.2 & 19.2 & 33.3 & 61.5 \\
 & Llama3-1.5B & 4.3 & 4.5 & 4.8 & 5.6 & 7.2 & 10.6 & 17.1 & 4.8 & 5.6 & 7.2 & 10.6 & 17.1 & 29.6 & 56.6 \\
\bottomrule
\end{tabular}%
}
\end{table*}

\begin{table*}
\centering
\caption{Prefill speed ($\times 10^3$ tok/s) of various models on A100-80 GPU (F16 precision, FlashAttention 2) across varying prompt lengths. Batch sizes are denoted as \textbf{1} and \textbf{4}.}
\label{tbl:gpu_eff_latency}
\resizebox{\linewidth}{!}{%
\begin{tabular}{@{}cl|ccccccc|ccccccc@{}}
\toprule
 & \textbf{Model} & \multicolumn{7}{c|}{\textbf{Batch Size 1}} & \multicolumn{7}{c}{\textbf{Batch Size 4}} \\
\cmidrule(lr){3-9}\cmidrule(l){10-16}
 & & \textbf{512} & \textbf{1024} & \textbf{2048} & \textbf{4096} & \textbf{8192} & \textbf{16384} & \textbf{32768} & \textbf{512} & \textbf{1024} & \textbf{2048} & \textbf{4096} & \textbf{8192} & \textbf{16384} & \textbf{32768} \\
\midrule
\multirow{3}{*}{\rotatebox[origin=c]{90}{$\sim$500M}} & Mamba-500M & 8.6 & 17.8 & 29.3 & 35.6 & 41.0 & 44.4 & 46.5 & 28.0 & 34.5 & 38.8 & 43.4 & 46.7 & 47.6 & 46.9 \\ 
 & Llama3-500M & 14.3 & 27.8 & 48.2 & 65.7 & 76.1 & 88.7 & 94.6 & 47.2 & 74.3 & 89.8 & 93.0 & 95.8 & 96.8 & 96.4 \\ 
 & \cellcolor{gray!20} MossNet-8x200M+ (top-2)
   & \cellcolor{gray!20} 10.8 & \cellcolor{gray!20} 21.6 & \cellcolor{gray!20} 40.4 & \cellcolor{gray!20} 48.6 & \cellcolor{gray!20} 52.7 & \cellcolor{gray!20} 64.0 & \cellcolor{gray!20} 75.8
   & \cellcolor{gray!20} 40.6 & \cellcolor{gray!20} 49.5 & \cellcolor{gray!20} 54.0 & \cellcolor{gray!20} 66.1 & \cellcolor{gray!20} 76.9 & \cellcolor{gray!20} 84.0 & \cellcolor{gray!20} 87.1 \\
\midrule
\multirow{3}{*}{\rotatebox[origin=c]{90}{$\sim$700M}} & Mamba-700M & 11.8 & 23.6 & 29.3 & 33.9 & 36.9 & 40.0 & 40.8 & 28.1 & 31.8 & 36.2 & 39.2 & 41.0 & 41.0 & 40.1 \\
 & Llama3-700M & 18.7 & 34.6 & 55.2 & 67.7 & 72.6 & 76.4 & 78.3 & 56.8 & 71.8 & 76.5 & 78.0 & 78.6 & 78.9 & 78.4 \\ 
 & \cellcolor{gray!20} MossNet-8x200M+ (top-3)
   & \cellcolor{gray!20} 11.3 & \cellcolor{gray!20} 22.1 & \cellcolor{gray!20} 33.4 & \cellcolor{gray!20} 38.5 & \cellcolor{gray!20} 44.8 & \cellcolor{gray!20} 57.7 & \cellcolor{gray!20} 65.4
   & \cellcolor{gray!20} 33.9 & \cellcolor{gray!20} 39.2 & \cellcolor{gray!20} 41.8 & \cellcolor{gray!20} 59.0 & \cellcolor{gray!20} 67.9 & \cellcolor{gray!20} 71.7 & \cellcolor{gray!20} 74.3 \\ 
\midrule 
\multirow{4}{*}{\rotatebox[origin=c]{90}{$\sim$1.5B}} & Mamba-1.5B & 9.9 & 15.3 & 18.5 & 20.2 & 23.2 & 24.8 & 25.2 & 17.9 & 19.7 & 22.2 & 23.4 & 24.3 & 24.2 & 23.8 \\
 & Rene-1.3B & 7.7 & 15.4 & 31.0 & 42.0 & 44.2 & 46.6 & 47.0 & 31.0 & 50.1 & 48.3 & 48.7 & 48.1 & 47.6 & 47.4 \\ 
 & Hymba-1.5B & 6.9 & 13.5 & 16.2 & 19.1 & 21.9 & 22.6 & 22.3 & 15.4 & 17.9 & 19.9 & 22.6 & 23.7 & 23.4 & 22.4 \\ 
 & Llama3-1.5B & 10.8 & 18.8 & 26.8 & 30.6 & 35.1 & 36.5 & 37.1 & 29.8 & 34.7 & 37.5 & 37.1 & 37.4 & 37.2 & 36.8 \\ 
\bottomrule
\end{tabular}%
}
\end{table*}

\begin{table*}
\centering
\caption{Generation speed (tok/s) of various models on A100-80 GPU (F16 precision, FlashAttention 2) across varying prompt lengths. Batch sizes are denoted as \textbf{1} and \textbf{4}.}
\label{tbl:gpu_eff_throughput}
\resizebox{\linewidth}{!}{%
\begin{tabular}{@{}cl|ccccccc|ccccccc@{}}
\toprule
 & \textbf{Model} & \multicolumn{7}{c|}{\textbf{Batch Size 1}} & \multicolumn{7}{c}{\textbf{Batch Size 4}} \\
\cmidrule(lr){3-9}\cmidrule(l){10-16}
 & & \textbf{512} & \textbf{1024} & \textbf{2048} & \textbf{4096} & \textbf{8192} & \textbf{16384} & \textbf{32768} & \textbf{512} & \textbf{1024} & \textbf{2048} & \textbf{4096} & \textbf{8192} & \textbf{16384} & \textbf{32768} \\
\midrule
\multirow{3}{*}{\rotatebox[origin=c]{90}{$\sim$500M}} & Mamba-500M & 17.9 & 18.3 & 17.7 & 18.0 & 18.1 & 17.8 & 17.9 & 68.3 & 70.5 & 71.8 & 69.8 & 70.1 & 70.4 & 70.6 \\ 
 & Llama3-500M & 28.8 & 29.2 & 27.7 & 28.2 & 27.6 & 26.2 & 22.7 & 110.4 & 113.5 & 108.3 & 101.8 & 92.2 & 76.0 & 55.9 \\ 
 & \cellcolor{gray!20} MossNet-8x200M+ (top-2)
   & \cellcolor{gray!20} 27.5 & \cellcolor{gray!20} 28.6 & \cellcolor{gray!20} 27.0 & \cellcolor{gray!20} 28.1 & \cellcolor{gray!20} 27.9 & \cellcolor{gray!20} 27.9 & \cellcolor{gray!20} 27.7
   & \cellcolor{gray!20} 91.2 & \cellcolor{gray!20} 92.5 & \cellcolor{gray!20} 91.1 & \cellcolor{gray!20} 90.3 & \cellcolor{gray!20} 89.5 & \cellcolor{gray!20} 86.5 & \cellcolor{gray!20} 81.0 \\ 
\midrule
\multirow{3}{*}{\rotatebox[origin=c]{90}{$\sim$700M}} & Mamba-700M & 24.5 & 24.7 & 25.0 & 24.8 & 24.7 & 24.6 & 25.2 & 98.0 & 95.1 & 95.7 & 97.4 & 96.6 & 94.56 & 99.0 \\ 
 & Llama3-700M & 37.1 & 40.1 & 39.0 & 38.9 & 37.4 & 33.8 & 29.6 & 158.0 & 151.0 & 144.0 & 133.5 & 113.5 & 87.6 & 61.0 \\ 
 & \cellcolor{gray!20} MossNet-8x200M+ (top-3)
   & \cellcolor{gray!20} 28.7 & \cellcolor{gray!20} 28.6 & \cellcolor{gray!20} 28.1 & \cellcolor{gray!20} 28.3 & \cellcolor{gray!20} 28.0 & \cellcolor{gray!20} 27.4 & \cellcolor{gray!20} 26.9
   & \cellcolor{gray!20} 90.2 & \cellcolor{gray!20} 91.7 & \cellcolor{gray!20} 92.4 & \cellcolor{gray!20} 91.6 & \cellcolor{gray!20} 87.1 & \cellcolor{gray!20} 85.2 & \cellcolor{gray!20} 53.0 \\ 
\midrule 
\multirow{4}{*}{\rotatebox[origin=c]{90}{$\sim$1.5B}} & Mamba-1.5B & 20.7 & 21.1 & 21.0 & 21.1 & 20.6 & 20.8 & 20.6 & 81.3 & 82.0 & 82.4 & 80.8 & 81.9 & 82.0 & 81.7 \\ 
 & Rene-1.3B & 27.3 & 27.0 & 27.3 & 27.0 & 27.0 & 26.6 & 26.7 & 107.6 & 105.5 & 106.5 & 106.9 & 107.6 & 107.4 & 108.8 \\ 
 & Hymba-1.5B & 14.9 & 14.9 & 14.6 & 14.9 & 14.5 & 14.6 & 14.6 & 58.4 & 56.3 & 57.0 & 57.2 & 56.3 & 56.6 & 56.4 \\ 
 & Llama3-1.5B & 22.2 & 22.4 & 21.2 & 21.4 & 20.9 & 18.3 & 15.0 & 88.3 & 85.7 & 80.5 & 69.7 & 58.5 & 43.3 & 28.2 \\ 
\bottomrule
\end{tabular}%
}
\end{table*}

Figs.~\ref{fig:gpu_speed_memory} and \ref{fig:device_speed_memory} summarize the speed and memory performance of MossNet-8x200M+ and baseline models at different active parameters scales. We present the detailed results for GPU profiling in Tables~\ref{tbl:gpu_eff_memory}, \ref{tbl:gpu_eff_latency}, and \ref{tbl:gpu_eff_throughput} for memory consumption, prefill speed, and generation speed, respectively. We also present the detailed results for mobile profiling in Tables \ref{tbl:device_eff_memory}, \ref{tbl:device_eff_latency}, and \ref{tbl:device_eff_throughput}.

\begin{table*}
\centering
\caption{Memory (MB) of various models on S24 Ultra (Q8 precision) across varying prompt lengths and active parameters. Batch size set to 1. MossNet on-device results reported without SWA implemented.}
\label{tbl:device_eff_memory}
\resizebox{0.65\linewidth}{!}{%
\begin{tabular}{@{}cl|ccccccc@{}}
\toprule
 & \textbf{Model} & \textbf{512} & \textbf{1024} & \textbf{2048} & \textbf{4096} & \textbf{8192} & \textbf{16384} & \textbf{32768} \\
\midrule
\multirow{3}{*}{\rotatebox[origin=c]{90}{$\sim$500M}} & Mamba-500M & 673.7 & 673.7 & 673.7 & 673.7 & 673.7 & 673.7 & 673.7 \\
 & Llama3-500M & 610.6 & 621.6 & 649.6 & 747.4 & 995.4 & 1491.4 & 2483.4 \\ 
 & \cellcolor{gray!20} MossNet-8x200M+ (top-2)
   & \cellcolor{gray!20} 1666.9 & \cellcolor{gray!20} 1667.7 & \cellcolor{gray!20} 1670.7 & \cellcolor{gray!20} 1720.7 & \cellcolor{gray!20} 1872.7 & \cellcolor{gray!20} 2176.7 & \cellcolor{gray!20} 2784.7 \\ 
\midrule
\multirow{3}{*}{\rotatebox[origin=c]{90}{$\sim$700M}} & Mamba-700M & 919.3 & 919.3 & 919.3 & 919.3 & 919.3 & 919.3 & 919.3 \\ 
 & Llama3-700M & 901.6 & 917.6 & 942.6 & 1046.4 & 1302.4 & 1814.4 & 2838.4 \\ 
 & \cellcolor{gray!20} MossNet-8x200M+ (top-3)
   & \cellcolor{gray!20} 1707.0 & \cellcolor{gray!20} 1709.9 & \cellcolor{gray!20} 1711.9 & \cellcolor{gray!20} 1761.8 & \cellcolor{gray!20} 1913.8 & \cellcolor{gray!20} 2217.8 & \cellcolor{gray!20} 2825.8 \\
\midrule
\multirow{2}{*}{\rotatebox[origin=c]{90}{$\sim$1.5B}} & Mamba-1.5B & 1748.0 & 1748.0 & 1748.0 & 1748.0 & 1748.0 & 1748.0 & 1748.0 \\ 
 & Llama3-1.5B & 1657.2 & 1694.2 & 1760.2 & 1950.0 & 2374.0 & 3222.0 & OOM \\ 
\bottomrule
\end{tabular}%
}
\end{table*}

\begin{table*}
\centering
\caption{Memory (MB) of various models on S24 Ultra (Q8 precision) across varying prompt lengths and active parameters. Batch size set to 1. MossNet on-device results reported without SWA implemented.}
\label{tbl:device_eff_latency}
\resizebox{0.65\linewidth}{!}{%
\begin{tabular}{@{}cl|ccccccc@{}}
\toprule
 & \textbf{Model} & \textbf{512} & \textbf{1024} & \textbf{2048} & \textbf{4096} & \textbf{8192} & \textbf{16384} & \textbf{32768} \\
\midrule
\multirow{3}{*}{\rotatebox[origin=c]{90}{$\sim$500M}} & Mamba-500M & 673.7 & 673.7 & 673.7 & 673.7 & 673.7 & 673.7 & 673.7 \\
 & Llama3-500M & 610.6 & 621.6 & 649.6 & 747.4 & 995.4 & 1491.4 & 2483.4 \\ 
 & \cellcolor{gray!20} MossNet-8x200M+ (top-2)
   & \cellcolor{gray!20} 1666.9 & \cellcolor{gray!20} 1667.7 & \cellcolor{gray!20} 1670.7 & \cellcolor{gray!20} 1720.7 & \cellcolor{gray!20} 1872.7 & \cellcolor{gray!20} 2176.7 & \cellcolor{gray!20} 2784.7 \\ 
\midrule
\multirow{3}{*}{\rotatebox[origin=c]{90}{$\sim$700M}} & Mamba-700M & 919.3 & 919.3 & 919.3 & 919.3 & 919.3 & 919.3 & 919.3 \\ 
 & Llama3-700M & 901.6 & 917.6 & 942.6 & 1046.4 & 1302.4 & 1814.4 & 2838.4 \\ 
 & \cellcolor{gray!20} MossNet-8x200M+ (top-3)
   & \cellcolor{gray!20} 1707.0 & \cellcolor{gray!20} 1709.9 & \cellcolor{gray!20} 1711.9 & \cellcolor{gray!20} 1761.8 & \cellcolor{gray!20} 1913.8 & \cellcolor{gray!20} 2217.8 & \cellcolor{gray!20} 2825.8 \\
\midrule
\multirow{2}{*}{\rotatebox[origin=c]{90}{$\sim$1.5B}} & Mamba-1.5B & 1748.0 & 1748.0 & 1748.0 & 1748.0 & 1748.0 & 1748.0 & 1748.0 \\ 
 & Llama3-1.5B & 1657.2 & 1694.2 & 1760.2 & 1950.0 & 2374.0 & 3222.0 & OOM \\ 
\bottomrule
\end{tabular}%
}
\end{table*}

\begin{table*}
\centering
\caption{Prefill speed (tok/s) of various models on S24 Ultra (Q8 precision) across varying prompt lengths and active parameters. Batch size set to 1. MossNet on-device results reported without SWA implemented.}
\label{tbl:device_eff_throughput}
\resizebox{0.58\linewidth}{!}{%
\begin{tabular}{@{}cl|ccccccc@{}}
\toprule
 & \textbf{Model} & \textbf{512} & \textbf{1024} & \textbf{2048} & \textbf{4096} & \textbf{8192} & \textbf{16384} & \textbf{32768} \\
\midrule
\multirow{3}{*}{\rotatebox[origin=c]{90}{$\sim$500M}} & Mamba-500M & 79 & 72 & 70 & 67 & 65 & 62 & 59 \\
 & Llama3-500M & 107 & 87 & 65 & 45 & 24 & 13 & 6 \\ 
 & \cellcolor{gray!20} MossNet-8x200M+ (top-2)
   & \cellcolor{gray!20} 120 & \cellcolor{gray!20} 107 & \cellcolor{gray!20} 101 & \cellcolor{gray!20} 92 & \cellcolor{gray!20} 74 & \cellcolor{gray!20} 57 & \cellcolor{gray!20} 36 \\ 
\midrule
\multirow{3}{*}{\rotatebox[origin=c]{90}{$\sim$700M}} & Mamba-700M & 56 & 52 & 51 & 50 & 46 & 45 & 46 \\ 
 & Llama3-700M & 71 & 58 & 48 & 35 & 21 & 12 & 6 \\ 
 & \cellcolor{gray!20} MossNet-8x200M+ (top-3)
   & \cellcolor{gray!20} 74 & \cellcolor{gray!20} 73 & \cellcolor{gray!20} 67 & \cellcolor{gray!20} 62 & \cellcolor{gray!20} 59 & \cellcolor{gray!20} 42 & \cellcolor{gray!20} 31 \\
\midrule
\multirow{2}{*}{\rotatebox[origin=c]{90}{$\sim$1.5B}} & Mamba-1.5B & 26 & 24 & 23 & 22 & 22 & 21 & 21 \\ 
 & Llama3-1.5B & 31 & 26 & 21 & 15 & 9 & 5 & OOM \\ 
\bottomrule
\end{tabular}%
}
\end{table*}

\subsection{Long-context Performance}

\begin{table*}[]
\caption{Long context performance of MossNet and various baselines. Training tokens and whether SWA is implemented for corresponding models are also provided. *Perplexity should not be compared directly for different models as they could be using different tokenizers; trend with context size should be observed instead.}
\centering
\resizebox{0.5\linewidth}{!}{
\begin{tabular}{@{}l|cccccc@{}}
\toprule
\multirow{2}{*}{\textbf{Model}} & \multirow{2}{*}{\textbf{Train Tok.}} & \multirow{2}{*}{\textbf{SWA}} & \multicolumn{4}{c}{\textbf{PPL*}} \\ \cmidrule(l){4-7} 
 &  &  & \textbf{2048} & \textbf{4096} & \textbf{8192} & \textbf{16384} \\ \midrule
Phi1.5 & 0.15T & F & 11.8 & 70.6 & 441.1 & OOM \\
SmolLM2-1.7B & 11T & F & 9.8 & 83.6 & 590.8 & OOM \\
Mamba-1.4B & 0.3T & - & 6.9 & 6.7 & 7.2 & OOM \\
\rowcolor{gray!20} MossNet-8x200M+ & 2.8T & T & 8.6 & 8.2 & 8.0 & 8.7 \\ \bottomrule
\end{tabular}}
\label{tbl:long_context}
\end{table*}

Table~\ref{tbl:long_context} presents the long context performance of MossNet-8x200M+ and various baselines. We observe that architectures using SSM and/or SWA backbones do not lose perplexity as the context size is increased. This confines with the observations of \citet{samba}.

\subsection{Choosing $k$}

{\color{black} Table~3 varies the number of routed experts~($k$) while keeping all other hyper‑parameters fixed.  The main observations are:

\begin{itemize}
  \item \textbf{Large first step, then saturation.}  
        With eight experts, increasing $k$ from~1 to~2 reduces perplexity from~15.3 to~13.1 ($-2.2$), whereas a further increase to $k=4$ only improves perplexity to~12.6 ($-0.5$) while significantly increasing active parameter count ($+3.3$).
  \item \textbf{Pool size matters.}  
        Holding $k=2$ and shrinking the pool from 8 to 4 experts worsens perplexity (13.1 to 14.4).  Conversely, expanding the pool to 16 experts (still routing $k=2$) attains the best perplexity (12.0) \emph{without} increasing \emph{active} parameters, although
        the \emph{total} model size grows, resulting in a larger disk size.
\end{itemize}

We propose the following practical rule-of-thumb:
\[
  k \;=\;
  \min \!\Bigl( 2,\; \bigl\lfloor N_{\text{experts}} / 4 \bigr\rfloor \Bigr)
\]
\vspace{-0.5em}

This caps compute at~$\le 2\times$ the dense baseline, preserving MossNet’s on‑device speed advantage. It also maintains high router entropy; choose $k=1$ only when $N_{\text{experts}}<8$. Finally, it secures $\ge 80\%$ of the achievable perplexity gain while avoiding the potential latency hit for $k\ge4$.}

\subsection{Computational complexity of MoE blocks (and MossNet)}

{\color{black} MossNet replaces the dense channel‐/time‐mixing layers in Mamba with \textit{standard} top‑$k$ MoE blocks. Because only $k$ experts are executed per token, its \emph{time} complexity is
$\Theta(L\,k\,d\,d_{\text{ff}})$ and its \emph{activation memory} is
$\Theta(k\,d)$—identical to other MoE architectures for the same $k$. Thus MossNet offers the usual “capacity without extra compute” benefit of MoE while preserving the linear‑time, constant‑cache profile of its dense counterpart.}

\end{document}